\newtheorem{theorem}{Theorem}[section]
\newtheorem{lemma}{Lemma}[section]
\newtheorem{definition}{Definition}[section]
\newtheorem{model}{Model}[section]
\newtheorem{remark}{Remark}[section]
\title{Logsig-RNN: a novel network for robust and efficient skeleton-based action recognition}
\author{ Shujian Liao\\
	Department of Mathematics\\
	University College London\\
	\texttt{shujian.liao.18@ucl.ac.uk} \\
	\And
Terry Lyons \\
	Mathematical Institute\\
	University of Oxford\\
	\texttt{terry.lyons@maths.ox.ac.uk} \\
	 \And
Weixin Yang \\
	Mathematical Institute\\
	University of Oxford\\
	\texttt{yangw2@maths.ox.ac.uk} \\
	\And
	\href{https://orcid.org/0000-0002-7660-1051}{\includegraphics[scale=0.06]{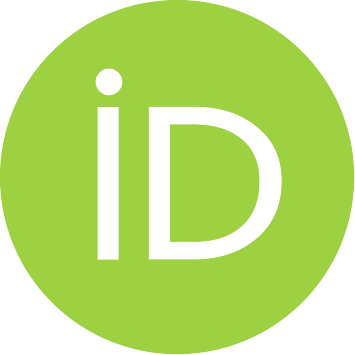}}
	Kevin Schlegel\\
	Department of Mathematics\\
	University College London\\
	\texttt{k.schlegel@ucl.ac.uk} \\
	\And
	\href{https://orcid.org/0000-0001-5485-4376}{\includegraphics[scale=0.06]{orcid.pdf}\hspace{1mm}}Hao Ni\thanks{Corresponding author.} \\
	Department of Mathematics\\
	University College London\\
	\texttt{h.ni@ucl.ac.uk} \\
}
\begin{document}
\maketitle

\begin{abstract}
	This paper contributes to the challenge of skeleton-based human action recognition in videos. The key step is to develop a generic network architecture to extract discriminative features for the spatio-temporal skeleton data. In this paper, we propose a novel module, namely Logsig-RNN, which is the combination of the log-signature layer and recurrent type neural networks (RNNs). The former one comes from the mathematically principled technology of signatures and log-signatures as representations for streamed data, which can manage high sample rate streams, non-uniform sampling and time series of variable length. It serves as an enhancement of the recurrent layer, which can be conveniently plugged into neural networks. Besides we propose two path transformation layers to significantly reduce path dimension while retaining the essential information fed into the Logsig-RNN module. (The network architecture is illustrated in Figure \ref{LP_Logsig_RNN} (Right).) Finally, numerical results demonstrate that replacing the RNN module by the Logsig-RNN module in SOTA networks consistently improves the performance on both Chalearn gesture data and NTU RGB+D 120 action data in terms of accuracy and robustness. In particular, we achieve the state-of-the-art accuracy on Chalearn2013 gesture data by combining simple path transformation layers with the Logsig-RNN. Codes are available at \url{https://github.com/steveliao93/GCN_LogsigRNN}.
	
\begin{figure}[!ht]
  \centering
  \begin{minipage}{0.45\textwidth}
		\includegraphics[width= 0.99 \textwidth]{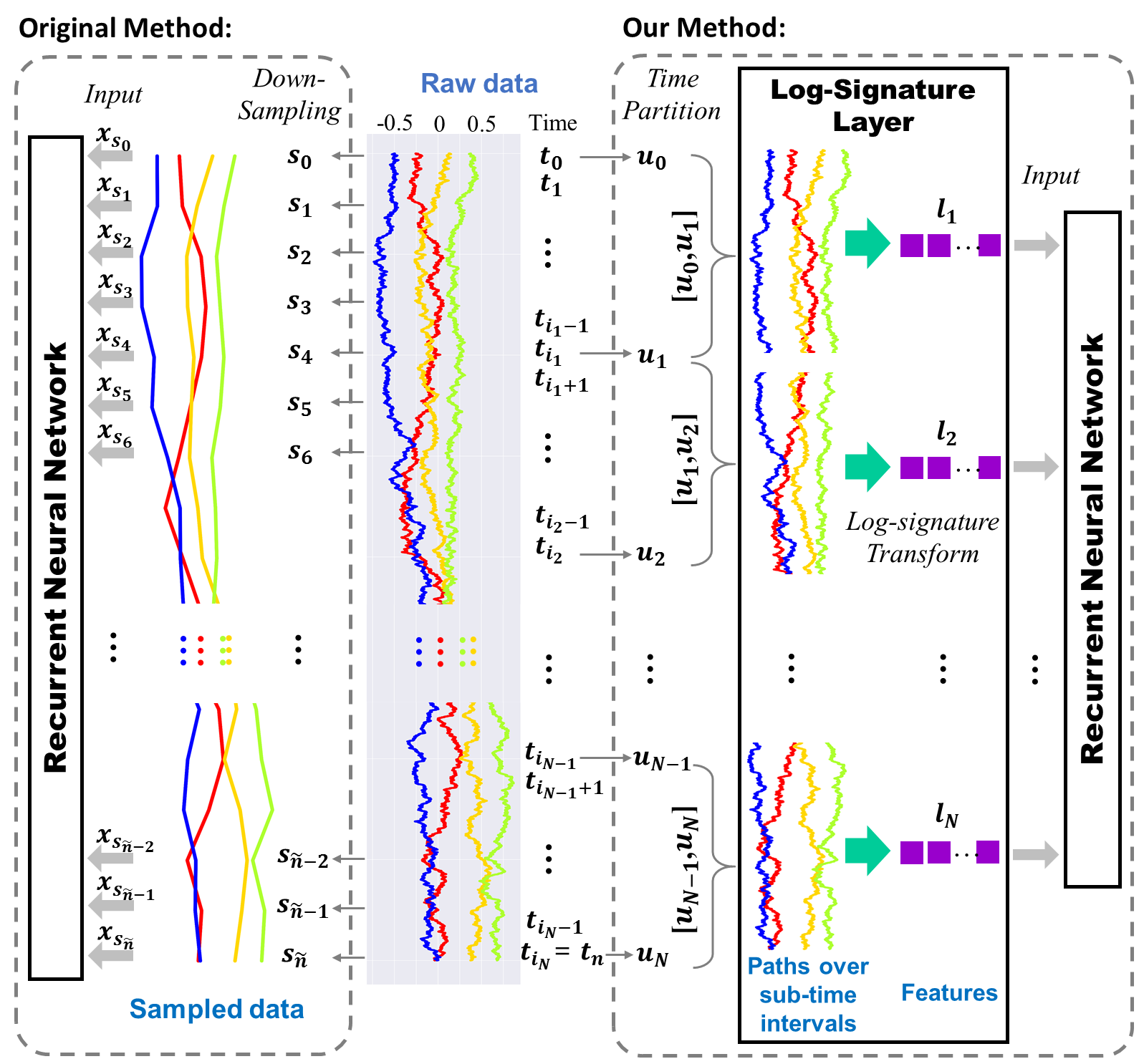}
\end{minipage}
\quad
  \begin{minipage}{0.5\textwidth}
  \includegraphics[width= 0.99\textwidth]{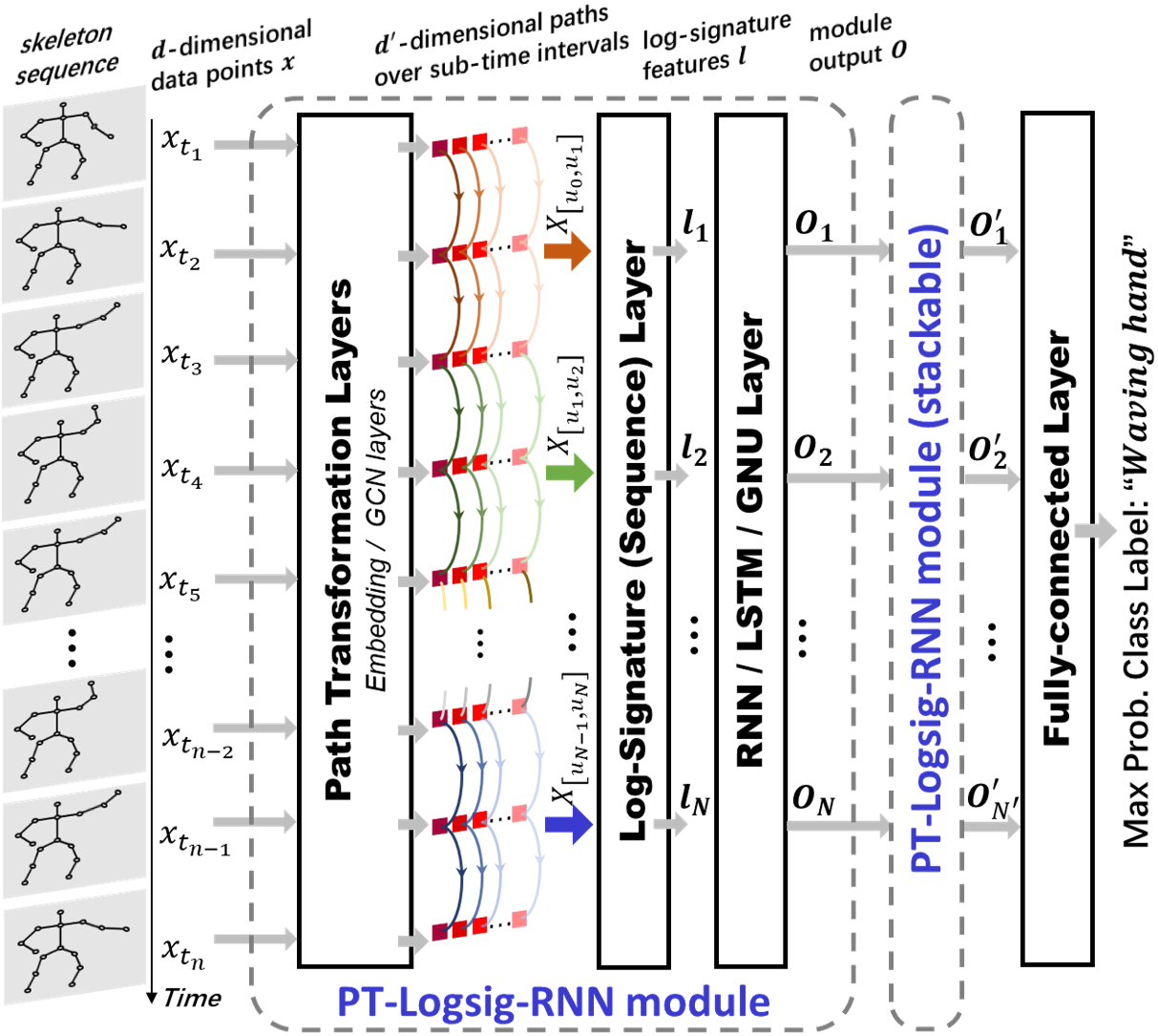}
\end{minipage}
\caption{(Left) Comparison of Logsig-RNN and RNN; (Right) Pipeline of the PT-Logsig-RNN module for skeleton-based human action recognition. This stackable module consists of Path Transformation Layers, followed by the Log-Signature Layer and an RNN-type layer. 
}\label{LP_Logsig_RNN}
\end{figure}
\end{abstract}

\keywords{Skeleton-based action recognition \and Recurrent neural network \and Deep learning \and Log-signature \and Rough path theory }

\section{Introduction}
\label{sec:intro}
Human action recognition (HAR) in videos is a classical and challenging problem in computer vision with a wide range of applications in human-computer interfaces and communications. Low-cost motion sensing devices, e.g. Microsoft Kinect, and reliable pose estimation methods, are both leading to an increase in popularity of research and development on skeleton-based HAR (SHAR). Compared with RGB-D HAR, skeleton-based methods are robust to illumination changes and have benefits of data privacy and security. 

Although vast literature is devoted to SHAR \cite{lo20163d,wang2019comparative,ren2020survey}, the challenge remains open due to two main issues: (1) how to extract discriminative representations for the high dimensional spatial structure of skeletons; (2) how to model the temporal dynamics of motion. 

With the increasing development and impressive performance of deep learning models e.g. Recurrent Neural Networks (RNN) \cite{lev2016rnn,10.1007/978-3-319-46487-9_50,Liu2018SkeletonBasedHA,nturgb,Wang2017ModelingTD}, Convolutional Neural Networks (CNN) \cite{caetano2019skelemotion,cheron2015p,8306456,li2018co,liufsnet,liu2018recognizing}, and Graph Convolutional Networks (GCN) \cite{li2019actional,shi2019skeleton}, data-driven deep features have gained increasing attention in SHAR \cite{ren2020survey}. However, these methods are often data greedy and computationally expensive, and not well adapted to data of different sizes/lengths. For example, when the lengths of data sequences are long and diverse, long-short term memory networks (LSTMs) either suffer from tremendous training cost with heuristic padding or are forced to down-sample/re-sample the data, which potentially misses the microscopic information. 

To address some of the difficulties and better capture the temporal dynamics, we propose a simple but effective neural network module, namely Logsig-RNN, by blending the Log-signature (Sequence) Layer with the RNN layer, as shown in Figure \ref{LP_Logsig_RNN} (Left). The \emph{log-signature}, which was originally introduced in rough path theory in the field of stochastic analysis, is an effective mathematical tool to summarize and vectorize complex un-parameterized streams of multi-modal data over a \emph{coarse} time scale with a \emph{low dimensional} representation, reducing the number of timesteps in the RNN. The properties of the log-signature also allow to handle time series with variable length without the use of padding and provide robustness to missing data. This allows the following RNN layer to learn more expressive deep features, leading to a systematic method to treat the complex time series data in SHAR.

The spatial structure in SHAR methods is commonly modelled using coordinates of joints \cite{fan2016action,jhuang2013towards,yang2014effective}, using body parts to model the articulated system \cite{evangelidis2014skeletal,li2016multiview,shahroudy2015multimodal} or by hybrid methods using information from both joints and body parts \cite{ke2017skeletonnet,li2018co}. Inspired by~\cite{li2018co} and \cite{msg3d_liu}, we investigate combining the flexible Logsig-RNN with Path Transformation Layers (PT) which include an Embedding Layer (EL) to reduce the spatial dimension of pure joint information and a vanilla Graph Convolutional Layer (GCN) to learn to implicitly capture the discriminative joints and body parts.

Our pipeline for SHAR is illustrated in Figure \ref{LP_Logsig_RNN} (Right). With quantitative analysis on Chalearn2013 gesture dataset and NTU RGB+D
120 action dataset, we validated the efficiency and robustness of Logsig-RNN and
the effects of the Path Transformation layers.

\section{Related Works on Signature Feature}
The signature feature (SF) of a path, originated from rough path theory \cite{lyons1998differential}, was introduced as universal feature for time-series modelling \cite{levin2013learning} and has been successfully applied to machine learning (ML) tasks, e.g. financial data analysis \cite{gyurko2013extracting,lyons2014feature}, handwriting recognition \cite{diehl2013rotation,graham2013sparse,yang2016dropsample,xie2018learning}, writer identification \cite{yang2016deepwriterid}, signature verification \cite{lai2017online}, psychiatric analysis \cite{arribas2018signature}, speech emotion recognition \cite{wang2019path} as well as action classification \cite{ahmad2019human,kiraly2016kernels,li2019skeleton,yang2017leveraging}. These SF-based methods can be grouped into the whole-interval manner and sliding-window manner. The whole-interval manner regards data streams of various lengths as paths over the entire time interval; then the SFs of fixed dimensions are computed to encode both global and local temporal dependencies  \cite{levin2013learning,diehl2013rotation,lai2017online,arribas2018signature,wang2019path,yang2017leveraging,li2019skeleton}. The sliding-window manner computes the SFs over window-based sub-intervals which are viewed as local descriptors and are further aggregated by deep networks \cite{graham2013sparse,yang2016dropsample,yang2016deepwriterid,xie2018learning,li2017lpsnet}. Our method falls into the second category using disjoint sliding windows. There are few works on using the log-signature, rather than the signature, in ML applications \cite{li2017lpsnet, morrill2021ICML}. In this paper, we demonstrate the properties, efficiency, and robustness of the log-signature compared with the signature. Recent work \cite{kidger2019deep} proposed to use the signature transformation as a layer rather than as a feature extractor. We propose a Log-signature (Sequence) Layer with impressive advantages in temporal modelling to improve RNNs. To our best knowledge, it is the first of the kind to (1) integrate the log-signature sequence with RNNs (2) as a differentiable layer which can be used anywhere within a larger model, instead of using the log signature as feature extractor. In particular, the output of the LogsigRNN is of the same shape as its input with a reduced time dimension.

\section{The Log-Signature of a Path}\label{SectionPreliminaries}
Let $E:=\mathbb{R}^{d}$, $J=[S, T]$ and $X: J \rightarrow E$ be a continuous path endowed with a norm denoted by $\vert \cdot \vert$. In practice we may only observe $X$ built at some fine scale out of time stamped
values $X^{\hat{\mathcal{D}}} = [X_{t_{1}}, X_{t_{2}}, \cdots, X_{t_{n}}]$, where $\hat{\mathcal{D}} = (t_{1}, \cdots, t_{n})$. Throughout this paper, we embed the discrete time series $X^{\hat{\mathcal{D}}}$ to a continuous path of bounded variation by linear interpolation for a unified treatment (See detailed discussion in Section 4 of \cite{levin2013learning}). Therefore, we focus on paths of bounded variation. In this section, we introduce the definition of the signature/log-signature. Then we summarize the key properties of the log-signature, which make it an effective, compact and high order feature of streamed data over time intervals. Lastly, we highlight the comparison between the log-signature and the signature. Further discussions and demo codes on the (log)-signature can be found in the supplementary material. 
\subsection{The (log)-signature of a path}
The background information and practical calculation of the signature as a faithful feature set for un‐parameterized paths can be found in \cite{kidger2019deep, levin2013learning,chevyrev2016primer}. We introduce the formal definition of the signature in this subsection.
\begin{definition}[Total variation]
The total variation of a continuous path $X: J\to E$ is defined on the interval $J$ to be 
    $\vert\vert X\vert\vert_{J}= \sup_{\mathcal{D}\subset J}\sum_{j=0}^{r-1}\left\vert X_{t_{j+1}}-X_{t_j}\right\vert$,
where the supremum is taken over any time partition of $J$, i.e. $\mathcal{D} = (t_{1}, t_{2}, \cdots, t_{r})$.  \footnote{
A time partition of $J$ is an increasing sequence of real numbers $\mathcal{D} = (t_{i})_{i =0}^{r}$ such that $S = t_{0} < t_{1}< \cdots < t_{r} = T$.}
\end{definition}
Any continuous path $X: J \to E$ with finite total variation, i.e. $\vert\vert X\vert\vert_{J}<\infty$, is called a path of bounded variation. Let $BV(J,E)$ denote the range of any continuous path mapping from $J$ to $E$ of bounded variation. 

Let $T((E))$ denote the tensor algebra space endowed with the tensor multiplication and componentwise addition, in which the signature and the log signature of a path take values.

\begin{definition}[The Signature of a Path]
Let $X \in BV(J, E)$. Define the $k^{th}$ level of the signature of the path $X_{J}$ as $\mathbf{X}^{k}_{J} =\int_{S}^{T} \dots \int_{S}^{u_{2}} dX_{u_{1}} \otimes \dots\otimes dX_{u_{n}}.$ The signature of $X$ is defined as $S(X_{J}) = (1, \mathbf{X}_{J}^1, \dots, \mathbf{X}_{J}^k, \dots)$.
Let $S_{k}(X_{J})$ denote the truncated signature of $X$ of degree $k$, i.e. $S_{k}(X_{J}) = (1, \mathbf{X}_J^1, \dots, \mathbf{X}_J^k).$
\end{definition}


Then we proceed to define the logarithm map in $T((E))$ in terms of a tensor power series as a generalization of the scalar logarithm. 
\begin{definition}[Logarithm map]\label{eqn_log}
	Let $a = (a_{0}, a_{1}, \cdots) \in T((E))$ be such that $a_{0} = 1$ and $t = a - 1$. Then the logarithm map denoted by $\log$ is defined as follows:
	\begin{eqnarray}
	\log(a) = \log(1+t) = \sum_{n = 1}^{\infty} \frac{(-1)^{n-1}}{n} t^{\otimes n}, \forall a \in T((E)).
	\end{eqnarray}
\end{definition}

\begin{definition}[The Log Signature of a Path] For $X \in BV(J, E)$, the log signature of a path $X$ denoted by $lS(X_{J})$ is the logarithm of the signature of the path $X$. Let $lS_{k}(X_{J})$ denote the truncated log signature of a path $X$ of degree $k$.
\end{definition}
The first level of the log-signature of a path $X$ is the increment of the path $X_{T} - X_{S}$. The second level of the log-signature is the signed area enclosed by $X$ and the chord connecting the end and start of the path $X$. There are three open-source python packages esig \cite{esig}, iisignature \cite{ReizensteinIisignature2018} and signatory \cite{kidger2020signatory} to compute the log-signature.  
\subsection{Properties of the log-signature}
\textbf{Uniqueness}: By the uniqueness of the signature and bijection between the signature and log-signature, it is proved that the log-signature determines a path up to tree‐like equivalence \cite{UniquenessOfSignature}. The log-signature encodes the order information of a path in a graded structure. Note that adding a monotone dimension, like the time, to a path can avoid tree‐like sections. \\
\textbf{Invariance under time parameterization}: We say that a path $\tilde{X}: J \rightarrow E$ is the time re-parameterization of $X:J \rightarrow E$ if and only if there exists a non-decreasing surjection $\lambda: J \rightarrow J$ such that $\tilde{X}_{t} = X_{\lambda(t)}$, $\forall t \in J$. 
Let $X \in BV(J, E)$  and a path $\tilde{X}: J \rightarrow E$ be a time re-parameterization of $X$. Then it is proved that the log-signatures of $X$ and $\tilde{X}$ are  equal\cite{lyons1998differential}. This is illustrated in figure \ref{TimeParameterizationInvariance}, where speed changes result in different time series representation but the same log-signature feature. This is beneficial as human motions are invariant under the change of video frame rates. The log-signature feature can remove the redundancy caused by the speed of traversing the path, which brings massive dimensionality reduction. 
\begin{figure}[!htbp]
	\centering
	\begin{minipage}[c]{0.22\textwidth}
		\includegraphics[width= \textwidth]{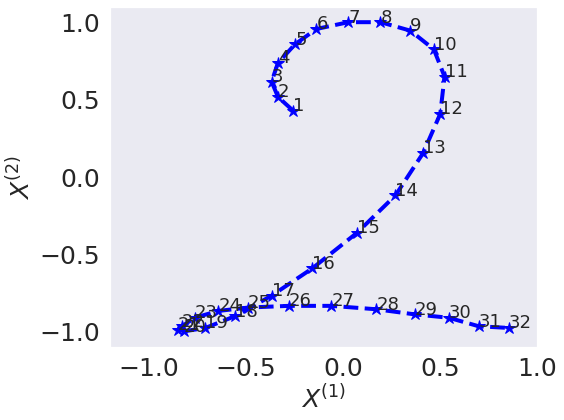}    
	\end{minipage}
	\quad
	\begin{minipage}[c]{0.22\textwidth}
		\includegraphics[width= \textwidth]{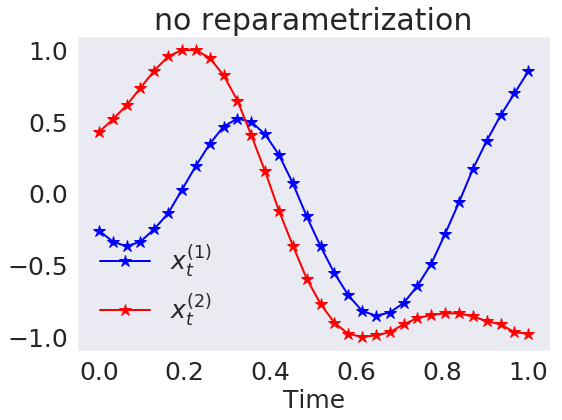}
	\end{minipage}
	\quad
	\begin{minipage}[c]{0.22\textwidth}
			\includegraphics[width= \textwidth]{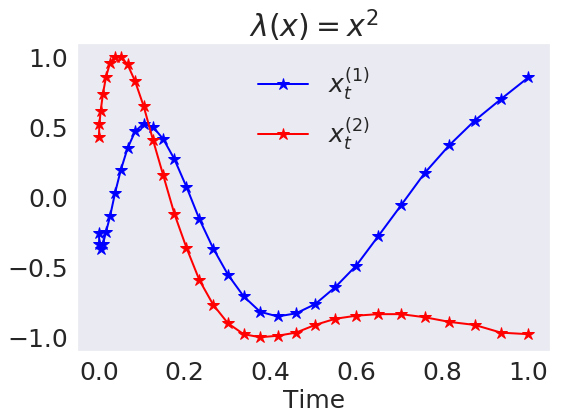}
	\end{minipage}
	\quad
	\begin{minipage}[c]{0.22\textwidth}
			\includegraphics[width= \textwidth]{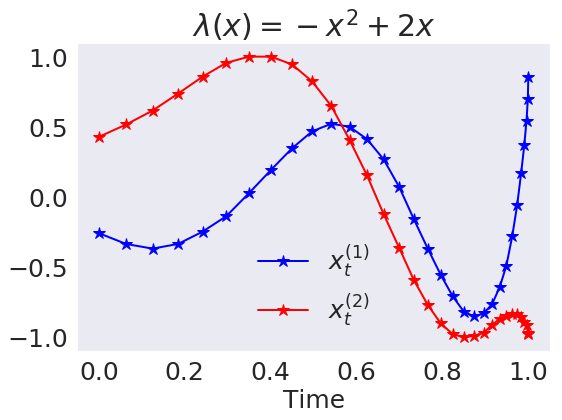}
	\end{minipage}
	\caption{The first figure represents the trajectory of the digit $2$, and the rest of figures plot the coordinates of the pen locations against time via different speed respectively, which share the same signature and log signature given in the first subplot. }\label{TimeParameterizationInvariance}
\end{figure}
$~$\\
\textbf{Irregular time series}: The truncated log-signature feature provides a robust descriptor of fixed dimension for time series of variable length, uneven time spacing and with missing data. For example, given a pen digit trajectory, random sub-sampling results in new trajectories of variable length and non-uniform spacing. In this case, the mean absolute percentage error (MAPE) of the log-signature is small (see Figure 3 in supplementary material). 

\subsection{Comparison between signature and log-signature }
The logarithm map is bijective on the domain $\{ a \in T((E))\vert a_{0} = 1\}$. Thus the log-signature and the signature is one-to-one. Therefore, the signature and log-signature share all the properties covered in the previous subsection. In the following, we highlight important differences between the signature and the log-signature.

\begin{wrapfigure}{r}{0.4\textwidth}
    \includegraphics[width=0.38\textwidth]{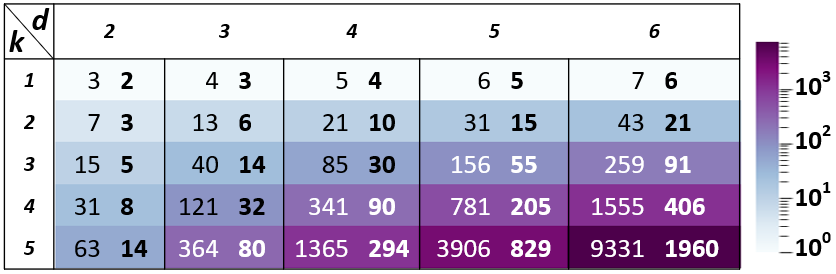}
  \caption{The dimension comparison between the signature and log-signature (in bold) of a $d$-dimensional path of degree $k$.} \label{Tab_sig_logsig_dim_comparison}
\end{wrapfigure}
The log-signature is a parsimonious representation for the signature feature, whose dimension is lower than that of the signature in general. For $d>2$, the dimension of the signature of a $d$-dimension path up to degree $k$ is $\frac{d^{k+1}-1}{d-1}$, and the dimension of the corresponding log-signature is equal to the necklace polynomial on $(d, k)$\cite{reizenstein2018iisignature}. Figure \ref{Tab_sig_logsig_dim_comparison} shows that the larger $d$ and $k$, the greater dimension reduction the log-signature brings over the signature (the colour represents the dimension gap between signature and the log signature). In contrast to the signature, the log-signature does not have universality, and thus it needs to be combined with non-linear models for learning.

\section{PT-Logsig-RNN Network}\label{section_logsig_RNN}
In this section, we propose a simple, compact and efficient PT-Logsig-RNN Network for SHAR, which is composed of (1) path transformation layers, (2) the Logsig-RNN module and (3) a fully connected layer. The overall PT-Logsig-RNN model is depicted in Figure \ref{LP_Logsig_RNN} (Right). We start by introducing the Log-Signature Layer and follow with the core module of our model, the Logsig-RNN module. In the end, we propose useful path transformation layers to further improve the performance of the Logsig-RNN module in SHAR tasks. 

\subsection{Log-Signature Layer}\label{BP_LogsignatureLayer}
We propose the Log-Signature (Sequence) Layer, which transforms an input data stream to a sequence of log-signatures over sub-time intervals. More specifically, consider a $d$-dimensional stream $x  \in BV(J, E)$ and let $\mathcal{D}:= (u_{k})_{k = 0}^{N}$ be a time partition of $J$. 

\begin{definition}[Log-Signature (Sequence) Layer]\label{def_logsig_seq}
A Log-Signature Layer of degree $M$ associated with $\mathcal{D}$ is a mapping from $BV(J, E)$ to $\mathbb{R}^{N \times d_{ls}}$ such that $\forall x \in BV(J,E)$, $x \mapsto (l_{k}^{M})_{k =0}^{N-1}$, where $l_{k}^{M}$ is the truncated log signature of $x_{[u_{k}, u_{k+1}]}$ of degree $M$, i.e. $l_{k}^{M}=lS_{M}(x_{[u_{k}, u_{k+1}]})$. Here $d_{ls}$ is the dimension of the log-signature of a $d$-dimensional path of degree $M$.
\end{definition}
In practice, the input stream $x$ is usually only observed at a finite collection of time points $\hat{\mathcal{D}}$, which can be non-uniform, high frequency and sample-dependent. By interpolation, embedding $x^{\hat{\mathcal{D}}}$ to the path space allows the Log-Signature Layer to treat each sample stream over $\mathcal{D}$ in a unified way. The output dimension of the Log-Signature Layer is $(N, d_{ls})$, which does not depend on the time dimension of the input streams. A higher frequency of input data would not cause any dimension issue, but it makes the computation of $l_{k}$ more accurate. The Log-Signature Layer can shrink the time dimension of the input stream effectively, while preserving local temporal information by using the log-signatures.

It is noted that the Log-Signature Layer does not have any trainable parameters, but allows backpropagation\footnote{The derivation and implementation details of the backpropagation through the Log-Signature Layer can be found in the supplementary material.} through it. We extend the work on the backpropagation algorithm of single log-signatures in \cite{reizenstein2018iisignature} to log-signature \emph{sequences}. Our implementation can accommodate time series samples of variable length over sub-time intervals, which may not be directly handled by the Log-Signature layer in the signatory package\cite{kidger2020signatory}.

\subsection{Logsig-RNN Network}\label{SectionInference}
Firstly, we introduce the conventional recurrent neural network. It is composed of three types of layers, i.e. the input layer $(x_{t})_t$, the hidden layer $(h_{t})_t$ and the output layer $(o_{t})_{t}$. A RNN takes an input sequence $x^{\hat{\mathcal{D}}} = (x_{t_{i}})_{i = 1}^{T}$ and computes an output $(o_{t})_{t = 1}^{T} \in \mathbb{R}^{T \times e}$ via
$h_t = \sigma(Ux_t + Wh_{t-1}), o_{t}= q(Vh_t)$, where $U$, $W$ and $V$ are model parameters, and $\sigma$ and $q$ are activation functions. Let $\mathcal{R}_\Theta((x_{t})_{t})$ denote the RNN model with $(x_{t})_{t}$ as the input and $\Theta:=\{U, W, V\}$ its parameter set. It is noted that this represents all the recurrent type neural networks, including LSTM, GRU, etc. Then we propose the following Logsig-RNN model.  

\begin{model}[Logsig-RNN Network]\label{logsigrnn}
Given $\mathcal{D}:= (u_{k})_{k = 0}^{N}$, a Logsig-RNN network computes a mapping from an input path $x \in BV(J, E)$ to an output defined as follows:
\begin{itemize}
    \item Compute $(l_{k})_{k =0}^{N-1}$ as the output of the Log-Signature Layer of degree $M$ associated with $\mathcal{D}$ for an input $x$.
 \item The output layer is computed by $\mathcal{R}_{\Theta}((l_{k})_{k = 0}^{N-1})$, where $\mathcal{R}_{\Theta}$ is a RNN type network.
\end{itemize}
\end{model}

The Logsig-RNN model (depicted in Figure \ref{LP_Logsig_RNN} (Left)) is a natural generalization of conventional RNNs. When $\mathcal{D}$ coincides with timestamps of the input data, the Logsig-RNN Model with $M=1$ is the RNN model with the increments of the data as input. One main advantage of our method is to reduce the time dimension of the RNN model significantly as we use the principled, non-linear and compact log-signature features to summarize the data stream locally. It leads to higher accuracy and efficiency compared with the standard RNN model. Logsig-RNN can overcome the limitation of Sig-OLR \cite{levin2013learning} on stability and efficiency issues by using compact log-signature features and more effective non-linear RNN models. Compared with conventional RNNs the Logsig-RNN model has the same input and output structure.
\subsection{Path Transformation Layers}\label{Ch_LP_Logsig_RNN}
To more efficiently and effectively exploit the spatio-temporal structure of the path, we further investigate the use of two main path transformation layers (i.e. Embedding Layer and Graph Convolutional Layer) in conjunction with the Log-Signature Layer. 

A skeleton sequence $X$ can be represented as
a $n \times F \times D$ tensor (landmark sequence) and a $F \times F$ matrix $\mathcal{A}$ (bone information), where $n$ is the number of frames in the sequence, $F$ is the number of joints in the skeleton, $D$ is the coordinate dimension and $\mathcal{A}$ is the adjacency matrix to denote whether two joints have a bone connection or not. 


\textbf{Embedding Layer (EL)} In the literature, many models only use landmark data without explicit bone information. One can view a skeleton sequence as a single path of high dimension($d$) (e.g. a skeleton of 25 3D joints has $d= F \cdot D = 75$). Since the dimension of the truncated log-signature grows fast w.r.t.\ $d$, we add a linear Embedding Layer before the Log-Signature Layer to reduce the spatial dimension and avoid this issue. Motivated by \cite{li2018co}, we first apply a linear convolution with kernel dimension 1 along the time and joint
dimensions to learn a joint level representation. Then we apply full convolution on the second and third coordinates to learn the interaction between different joints for an implicit representation of skeleton data. The output tensor of EL has the shape $n \times d_{el}$, where $d_{el}$ is a hyper-parameter to control spatial dimension reduction. One can view the embedding layer as a learnable path transformation that can help to increase the expressivity of the (log)-signature.

In practice, the Embedding Layer is more effective when subsequently adding the Time-Incorporated Layer (TL) and the Accumulative Layer (AL). The details of TL and AL can be found in Section 5 in the appendix. For simplicity we will use EL to denote the Embedding Layer composed with TL and AL in the below numerical experiments. 

\textbf{Graph Convolutional Layer (GCN)}
Recently, graph-based neural networks have been introduced and achieved SOTA accuracy in several SHAR tasks due to their ability to extract spatial information by incorporating additional bone information using graphs. We demonstrate how a GCN and the Logsig-RNN can be combined to form the GCN-Logsig-RNN to model spatio-temporal information.

First, we define the GCN layer on the skeleton sequence. Let $G_{\theta}$ denote a graph convolutional operator $F \times D \rightarrow F \times \tilde{D}$ associated with $\mathcal{A}$ by mapping $x$ to $(\Gamma^{-\frac{1}{2}} (\mathcal{A} + I) \Gamma^{-\frac{1}{2}})x\theta$, where $\Gamma^{ii} = \sum_{j} (\mathcal{A}^{ij} + I^{ij})$, and $I$ is the identity matrix. Then we extend $G_{\theta}$ to the skeleton sequence by applying $G_{\theta}$ to each frame $X_{t}$, i.e. $G_{\theta}: X = (X_{t})_{t = 1}^{n} \mapsto (G_{\theta}(X_t))_{t = 1}^{n}$ to obtain an output as a sequence of graphs of time dimension $n$ with the adjacency matrix $\mathcal{A}$.    

Next we propose the below \emph{GCN-Logsig-RNN} to combine GCN with the Logsig-RNN. Let $\hat{X}^{(i)}_{t} \in \mathbb{R}^{\tilde{D}}$ denote the features of the $i^{th}$ joint of the GCN output $G_{\theta}(X_t)$ at time $t$. For each $i^{th}$ joint, $\hat{X}^{(i)} = (\hat{X}^{(i)}_{t})_{t = 1}^{n}$ is a $\tilde{D}$-dimensional path. We apply the Logsig-RNN to $\hat{X}^{(i)}$ as the feature sequence of each $i^{th}$ joint, and hence obtain a sequence of graphs whose feature dimension is equal to the log-signature dimension and whose time dimension is the number of segments in Logsig-RNN. This in particular also allows for the module to be stacked.

\section{Numerical Experiments}\label{section_numerics}

We evaluate the proposed EL-Logsig-LSTM model on two datasets: (1) Charlearn 2013 data, and (2) NTU RGB+D 120 data. \textbf{Chalearn 2013 dataset} \cite{Escalera2013MultimodalGR} is a publicly available dataset for gesture recognition, which contains 11,116 clips of 20 Italian gestures performed by 27 subjects. Each body consists of 20 3D joints. 
\textbf{NTU RGB+D 120} \cite{Liu_2019_NTURGBD120} is a large-scale benchmark dataset for 3D action recognition, which consists of $114,480$ RGB+D video samples that are captured from 106 distinct human subjects for 120 action classes. 3D coordinates of 50 joints in each frame are used in this paper. In our experiments, we validate the performance of our model using \emph{only} the skeleton data of the above datasets.\footnote{We implemented the Logsig-LSTM network and all the numerical experiments in both Tensorflow and Pytorch.} 
\subsection{Chalearn2013 data}
\textbf{State-of-the-art performance}: We apply the EL-Logsig-LSTM model to Chalearn2013 and achieve state-of-the-art (SOTA) classification accuracy shown  in Table \ref{ChaLearn2013_STOA_Sens_analysis} of the 5-fold cross validation results. The EL-Logsig-LSTM ($M=2, N=4$) with data augmentation achieves performance comparable to the SOTA \cite{liao2019multi}.  

\begin{table}[!ht]
	\begin{minipage}{0.5\textwidth}
	\scalebox{0.75}{
		\begin{tabular}{|l|c|c|}
	\multicolumn{3}{c}{\textbf{{\fontsize{12.5}{13}\selectfont (a) Accuracy comparison}}}\\	
			\hline
			\textbf{Methods}                     & \textbf{Accuracy(\%)}   & \textbf{Data Aug.}   \\ \hline
			Deep LSTM \cite{nturgb}               & 87.10   & $-$      \\
			Two-stream LSTM \cite{Wang2017ModelingTD}           & 91.70      & $\surd$     \\
			ST-LSTM + Trust Gate \cite{8101019}& 92.00 & $\surd$			\\
			3s\_net\_TTM \cite{li2019skeleton}              & 92.08  &  $\surd$  \\
			\textbf{Multi-path CNN}\cite{liao2019multi} & \textbf{93.13}& $\surd$
			\\ \hline
			LSTM$_0$                   & 90.92    & $\times$       \\
			LSTM$_0$ (+data aug.)                    &   91.18  & $\surd$       \\
			\hline
			EL-Logsig-LSTM & 91.77 $\pm$ 0.34 & $\times$ \\ 
			\textbf{EL-Logsig-LSTM(+data aug.)} & \textbf{92.94 $\pm$ 0.21} & $\surd$ \\ \hline
			GCN-Logsig-LSTM & 91.92 $\pm$ 0.28 & $\times$ \\
			GCN-Logsig-LSTM(+data aug.) & 92.86 $\pm$ 0.23 & $\surd$ \\ \hline
	\end{tabular}}
\end{minipage}
\quad
	 \begin{minipage}{0.35 \textwidth}
	 ~~~
	   \scalebox{0.7}{
    \begin{tabular}{|r|c|c|c|}
			\multicolumn{4}{c}{\textbf{{\fontsize{12.5}{13}\selectfont(b) Effects of EL}}}\\
    \hline
   Methods & $D_{el}$ & Accuracy(\%) & \# Trainable weights \\
    \hline
    \multirow{6}{*}{With EL}&10    & 91.09 & 120,594 \\
    
    &20    & 92.92 & 213,574 \\
    
    &30    & \textbf{93.38} & 357,954 \\
    
    &40    & 93.10 & 553,734 \\
    
    &50    & 93.33 & 800,914 \\
    
    &60    & 93.30 & 1,099,494 \\
    \hline
    W/O EL &- & 91.51 & 985,458\\
    \hline
    \end{tabular}}
\quad
\text{                            }~~~  \scalebox{0.68}{
    \begin{tabular}{|c|c|c|c|}
  \multicolumn{4}{c}{\textbf{{\fontsize{12.5}{13}\selectfont (c) Effects of number of Segments ($N$)}}} \\
   \hline
$N$ & 2     & \multicolumn{1}{c|}{4} & \multicolumn{1}{c|}{8}  \\
     \hline
 Accuracy   & \multicolumn{1}{c|}{92.10$\pm$0.04} & \textbf{92.94$\pm$0.21} & 92.69$\pm$0.11   \\
\hline
$N$ &   \multicolumn{1}{c|}{16} & \multicolumn{1}{c|}{32} & \multicolumn{1}{c|}{64}\\
  \hline
  Accuracy    & 92.87$\pm$0.15 & 91.66$\pm$0.39 & 91.50$\pm$0.39\\
 \hline   
    \end{tabular}}%
    \end{minipage}

\caption{The accuracy comparison and sensitivity analysis on Chalearn2013. (a) The number after $\pm$ is the standard deviation of the accuracy. (b) $D_{el}$ is the spatial dimension of EL output.}\label{ChaLearn2013_STOA_Sens_analysis}
	 	
\end{table}

\noindent\textbf{Investigation of path transformation layers}: To validate the effects of EL, we compare the test accuracy and number of trainable weights in our network with and without EL on Chalearn 2013 data. Table \ref{ChaLearn2013_STOA_Sens_analysis} (b) shows that the addition of EL increases the accuracy by $1.87$ percentage points (\textit{pp}) while reducing the number of trainable weights by over $60\%$. Let $D_{el}$ denote the spatial dimension of the output of EL. We can see that even introducing EL without a reduction in dimensionality, i.e. setting $D_{el}$ to the original spatial dimension of 60, improves the test accuracy. Decreasing the dimensionality can lead to further improvements, with the best results in our experiments at $D_{el}=30$ with a test accuracy of $93.38\%$. A further decrease of $D_{el}$ leads to the performance deteriorating. The high accuracy of our model using EL to reduce the original spatial dimension from $60$ to $D_{el} = 30$ suggests that EL can learn implicit and effective spatial representations for the motion sequences. AL and TL contribute a 0.86 \textit{pp} gain in test accuracy to the EL-Logsig-LSTM model. 

\noindent\textbf{Investigation of different segment numbers in Logsig-LSTM}: Table~\ref{ChaLearn2013_STOA_Sens_analysis} (c) shows that increasing the number of segments ($N$) up to certain threshold increases the test accuracy, and increasing $N$ further worsens the model performance. For Chalearn2013, the optimal $N$ is $4$ and the optimal network architecture is depicted in Table A.1 in the supplement material.

\subsection{NTU RGB+D 120 data}
For NTU 120 data, we apply the EL-Logsig-LSTM, GCN-Logsig-LSTM and a stacked two-layer GCN-Logsig-LSTM(GCN-Logsig-LSTM$^2$) to demonstrate that the Logsig-RNN can be conveniently plugged into different neural networks and achieve competitive accuracy. 

Among non-GCN models, for X-Subject protocol, our EL-Logsig-LSTM model outperforms other methods, while it is competitive with \cite{caetano2019skelemotion} and \cite{liu2018recognizing} for X-Setup. The latter leverages the informative pose estimation maps as additional clues. Table \ref{tableofntu} (Left) shows the ablation study of EL-Logsig-LSTM For the X-Subject task, adding EL layer results in a 0.7 \textit{pp} gain over the baseline and the Logsig layer further gives a 5.9 \textit{pp} gain. 
\begin{table}[!ht]
		\begin{minipage}{0.45\textwidth}
		\scalebox{0.65}{
\begin{tabular}{|l|c|c|}
\hline
\textbf{Methods}                     & \textbf{X-Subject(\%)}&
\textbf{X-Setup(\%)}\\
\hline
ST LSTM\cite{10.1007/978-3-319-46487-9_50} & 55.7 & 57.9\\
FSNet\cite{liufsnet} & 59.9& 62.4\\
TS Attention LSTM\cite{Liu2018SkeletonBasedHA} & 61.2 & 63.3\\
Pose Evolution Map\cite{liu2018recognizing} & 64.6 & \textbf{66.9}\\
Skelemotion\cite{caetano2019skelemotion} & \textbf{67.7} & \textbf{66.9}\\
\hline
LSTM (baseline) & 60.9 $\pm$ 0.47& 57.6 $\pm$ 0.58\\
\hline

EL-LSTM & 61.6 $\pm$ 0.32& 60.0 $\pm$ 0.35\\
\textbf{EL-Logsig-LSTM} & \textbf{67.7 $\pm$ 0.38}& \textbf{66.9 $\pm$ 0.47}\\
\hline
\end{tabular}}
\end{minipage}
\quad
		\begin{minipage}{0.45\textwidth}
		\scalebox{0.65}{
\begin{tabular}{|l|c|c|}
\hline
\textbf{Methods}                     & \textbf{X-Subject(\%)}&
\textbf{X-Setup(\%)}\\
\hline
RA-GCN\cite{ra_gcn_song}& 81.1 & 82.7\\
4s Shift-GCN\cite{shiftgcn_Cheng}& 85.9 & 87.6\\
\textbf{MS-G3D Net}\cite{msg3d_liu} & 86.9 & \textbf{88.4}\\
\textbf{PA-Res-GCN}\cite{pa_res_song} & \textbf{87.3} & 88.3\\
\hline
(GCN-LSTM) &69.4 $\pm$ 0.46&71.4 $\pm$ 0.30\\
(GCN-LSTM)$^2$ & 72.1 $\pm$ 0.53 & 74.9 $\pm$ 0.27\\

\hline
GCN-Logsig-LSTM & 70.9 $\pm$ 0.22 & 72.4 $\pm$ 0.33\\
\textbf{(GCN- Logsig-LSTM)$^2$}& \textbf{75.8 $\pm$ 0.35} & \textbf{78.0 $\pm$ 0.46}\\
\hline
\end{tabular}}
$~$
\end{minipage}
	\caption{Comparison of the accuracy ($\pm$ standard deviation) on NTU RGB+D120 Data.}
	\label{tableofntu}
\end{table}

When changing the EL to GCN in EL-Logsig-LSTM, we improved the accuracy by 3.2 \textit{pp} and 5.5 \textit{pp} for X-Subject and X-Setup tasks respectively. By stacking two layers of the GCN-Logsig-LSTM, we further improve the accuracy by 4.9 \textit{pp} and 5.6 \textit{pp}. The SOTA GCN models (\cite{msg3d_liu, pa_res_song}) have achieved superior accuracy, which is about 11 \textit{pp} higher than our best model. This may result from the use of multiple input streams (e.g. joint, bones and velocity) and more complex network architecture (e.g. attention modules and residual networks). Notice that our EL-Logsig-LSTM is flexible enough to allow incorporating other advanced techniques or combining multimodal clues to achieve further improvement. 
\subsection{Robustness Analysis}
To test the robustness of each method in handling missing data and varying frame rate, we construct new test data by randomly discarding/repeating a certain percentage ($r$) of frames from each test sample, and evaluate the trained models on the new test data. Figure \ref{gesturemissingdata} (Left) shows that the proposed EL-Logsig-LSTM exhibit only very small drops in accuracy on Chalearn2013 as $r$ increases while the accuracy of the baseline drops significantly. We start to see a more significant drop in accuracy in our models only as we reach a drop rate of $50\%$. Figure \ref{gesturemissingdata} (Right) shows that the same is true for the proposed GCN-Logsig-LSTM model on the NTU data. Compared with GCN-LSTM (baseline) and the SOTA model MSG3D Net \cite{msg3d_liu} it is clearly more robust, at a drop rate of 50\% or more it even outperforms MSG3D Net which has a 10 $\textit{pp}$ higher accuracy than our model at $r=0$. This demonstrates that both EL-Logsig-LSTM and GCN-Logsig-LSTM are significantly more robust to missing data than previous models.

\begin{figure}[!ht]
	\centering
	 \begin{minipage}{0.4 \textwidth}
	 \includegraphics[width = 1.1\textwidth]{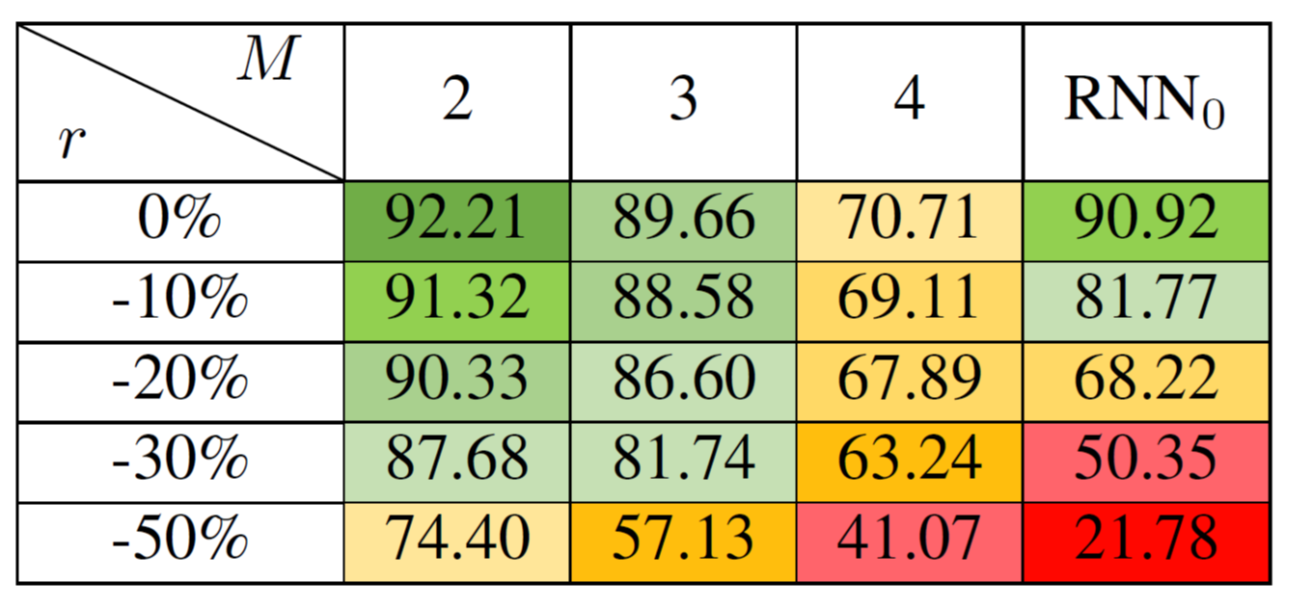}

	 \end{minipage}
	 \quad
	 \begin{minipage}{0.45 \textwidth}
	       \centering
	       \includegraphics[width = 0.95\textwidth]{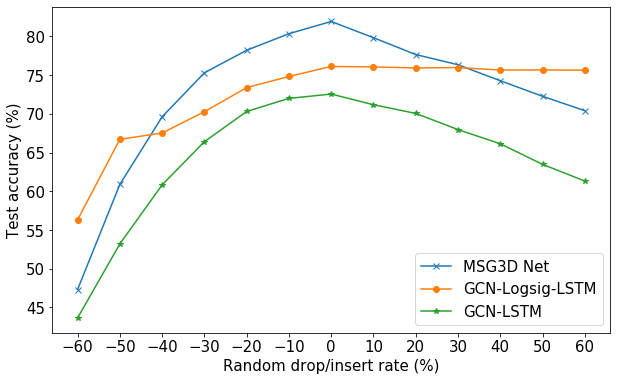}

	       \label{fig:my_label}
	 \end{minipage}
	 	\caption{The accuracy (\%) on the new test sets with various drop/insert rates ($r$). (Left) Chalearn2013. $N=4$, and no data augmentation is used. (Right) NTU RGB+D 120 data.}
	\label{gesturemissingdata}%
\end{figure}%

\subsection{Efficiency Analysis}
To demonstrate that the log-signature can help reduce the computational cost of backpropagating through many timesteps associated with RNN-type models we compare the training time and accuracy of a standard single LSTM block with a Logsig-LSTM using the same LSTM component on the ChaLearn dataset. To evaluate the efficiency as the length of the input sequence grows we linearly interpolate between frames to generate longer input sequences. We can see in the results in Figure \ref{fig:efficiency} that, as the length of the input sequence grows, the time to train the Logsig-LSTM grows much slower than that of the standard LSTM. Moreover, the Logsig-LSTM retains its accuracy while the accuracy of the LSTM drops significantly as the input length increases. This shows that the addition of the log-signature helps with capturing long-range dependencies in the data by efficiently summarizing local time intervals and thus reducing the number of timesteps in the LSTM.

We also compare the performance of the log signature and the discrete cosine transformation (DCT), which was used in \cite{mao2019learning} for reduction of the temporal dimension. Both transformations can be computed as a pre-processing step. As can be seen in Figure \ref{fig:efficiency} in this case the log-signature leads to slightly longer training time than DCT due to a larger spatial dimension, but achieves a considerably higher accuracy. If the transformation is computed at training time the cost of DCT is comparable to the log-signature.
\begin{figure}[!htbp]
	\centering
		\includegraphics[width=0.9\textwidth]{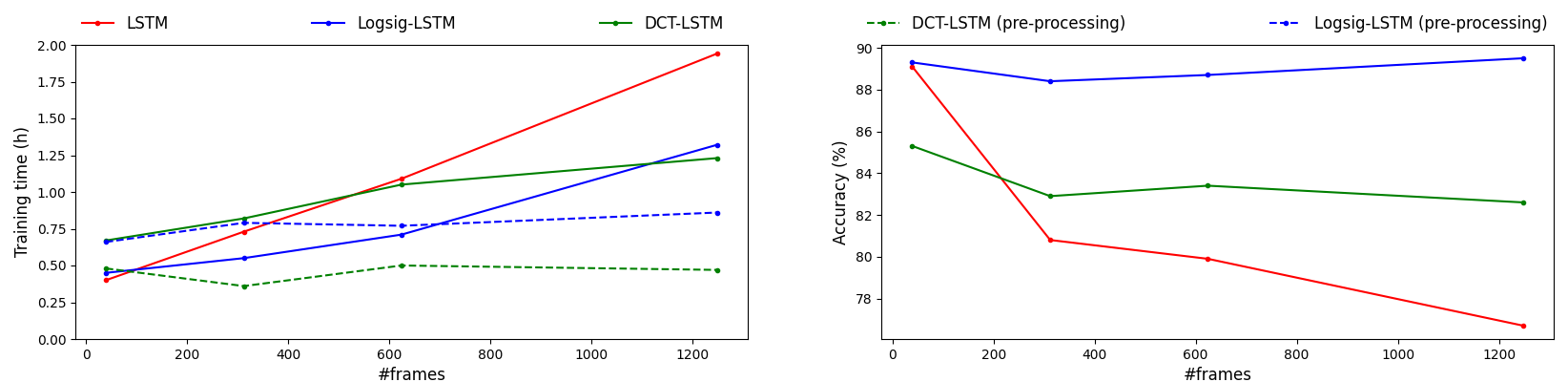} 
\caption{Comparison of training time and accuracy of standard LSTM and Logsig-LSTM.}\label{fig:efficiency}

\end{figure}
\vspace{-7mm}
\section{Conclusion}\label{SectionFutureWork}
We propose an efficient and compact end-to-end EL-Logsig-RNN network for SHAR tasks, providing a consistent performance boost of the SOTA models by replacing the RNN with the Logsig-RNN. As an enhancement of the RNN layer, the proposed Logsig-RNN module can reduce the time dimension, handle irregular time series and improve the robustness against missing data and varying frame rates. In particular, EL-Logsig-RNN achieves SOTA accuracy on Chalearn2013 for gesture recognition. For large-scale action data, the GCN-Logsig-RNN based models significantly improve the performance of EL-Logsig-RNN. Our model shows better robustness in handling varying frame rates. It merits further research to improve the combination with GCN-based models to further improve the accuracy while maintaining robustness. 
\section*{Acknowledgement}
All authors are supported by the Alan Turing Institute under the EPSRC grant EP/N510129/1.  T.L., W.Y., K. S. and H.N. are supported by the EPSRC under the program grant EP/S026347/1. W.Y. was supported by Royal Society Newton International Fellowship NIF/R1/18466. 

\section*{Appendix}
\appendix
The (log)-signature of a path is a core mathematical object in rough path theory, which is a branch of stochastic analysis. In this appendix, we give a brief overview of the signature and log signature of a path, and their properties which are useful in the context of machine learning. Besides we provide illustrative examples via the python notebook in the supplementary material. 
In the following, for concreteness, we focus on paths of bounded variation, but the definition of the signature can be generalized to paths of finite $p$-variation. Interested readers can refer to \cite{lyons2007differential} for a rigorous introduction of the (log)-signature of a path.
\section{The signature of a path} Let us recall the definition of the signature of a path. Let $J:=[S, T]$ denote a compact interval and $E:=\mathbb{R}^{d}$. $BV(J, E)$ denotes the space of all continuous paths of finite length from $J$ to $E$. 
\begin{definition}[The Signature of a Path]
Let $X \in BV(J, E)$. Let $I=(i_{1}, i_{2}, \cdots, i_{n})$ be a multi-index of length $n$ where $i_{j} \in \{1, \cdots, d\}, \forall j \in \{1, \cdots, n\}$. Define the coordinate signature of the path $X_{J}$ associated with the index $I$ as follows:
\begin{equation*}
    X^{I}_{J} = \underset{\underset{u_{1}, \dots, u_{k} \in J}{u_{1} < \dots < u_{k}}} { \int \dots \int} dX_{u_{1}}^{(i_{1})}  \dots  dX_{u_{n}}^{(i_{2})}.
\end{equation*}
The signature of $X$ is defined as follows:
\begin{equation}
    S(X)_J = (1, \mathbf{X}_J^1, \dots, \mathbf{X}_J^k, \dots)
\end{equation}
where $\displaystyle \mathbf{X}_J^k = \underset{\underset{u_{1}, \dots, u_{k} \in J}{u_{1} < \dots < u_{k}}} { \int \dots \int} dX_{u_{1}} \otimes \dots \otimes dX_{u_{k}} = (X^{I}_{J})_{I = (i_{1}, \cdots, i_{k})}, \forall k \geq 1$. 

Let $S_{k}(X)_{J}$ denote the truncated signature of $X$ of degree $k$, i.e.
\begin{equation}
    S_{k}(X)_J = (1, \mathbf{X}_J^1, \dots, \mathbf{X}_J^k).
\end{equation}
\end{definition}

The signature of a path has a geometric interpretation. The first level signature $\mathbf{X}_J^1$ is the increment of the path $X$, i.e. $X_{T} - X_{S}$, while the second level signature represents the signed area enclosed by the curve $X$ and the cord connecting the start and end points of the path $X$.
 
The signature of $X$ arises naturally as the basis function to represent the solution to linear controlled differential equation based on the Picard's iteration \cite{lyons1998differential}. It plays the role of non-commutative monomials on the path space. In particular, if $X$ is a one-dimensional path, the $k^{th}$ level of the signature of $X$ can be computed explicitly by induction for every $k \in \mathbb{N}$ as follows
\begin{eqnarray}
 \mathbf{X}_J^k = \frac{(X_{T} - X_{S})^{k}}{k!}.
\end{eqnarray}
The signature of a $d$-dimensional linear path is given explicitly in the below lemma.
\begin{lemma}
Let $X: [S, T] \rightarrow E$ be a linear path. Then 
\begin{equation}
    S^{n}(X) = \frac{(X_{T} - X_{S})^{\otimes n}}{n!}.
\end{equation}
Equivalently speaking, for any multi-index $I = (i_{1}, \cdots, i_{n})$,
\begin{equation}
    S^{I} = \frac{\prod_{j=1}^{n}(X_{T}^{(i_{j})})}{n!}.
\end{equation}
\end{lemma}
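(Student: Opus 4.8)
The plan is to compute the level-$n$ signature directly from its definition as an iterated integral, exploiting the fact that a linear path has a constant derivative. First I would fix the canonical parameterization of the linear path, writing $X_t = X_S + \frac{t-S}{T-S}\,(X_T - X_S)$ for $t \in [S,T]$, so that $dX_t = \frac{1}{T-S}(X_T - X_S)\,dt$. Setting $v := X_T - X_S \in E$, the increment $dX_{u_j}$ is, at every time $u_j$, the \emph{same} vector $\frac{1}{T-S}v$ times the scalar $du_j$; this constancy in each tensor slot is the structural feature that makes the linear case explicit, and it is what distinguishes this computation from the general inductive one for one-dimensional paths cited just above.

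Next I would substitute this into the defining integral $\mathbf{X}_J^n = \int_{S<u_1<\cdots<u_n<T} dX_{u_1}\otimes\cdots\otimes dX_{u_n}$. Since the tensor factor $v$ occupies each slot and does not depend on the integration variables, multilinearity of $\otimes$ lets me pull the whole tensor $v^{\otimes n}$ out of the integral, giving $\mathbf{X}_J^n = \frac{v^{\otimes n}}{(T-S)^n}\int_{S<u_1<\cdots<u_n<T} du_1\cdots du_n$. What remains is a purely scalar quantity: the Lebesgue volume of the ordered simplex $\{S<u_1<\cdots<u_n<T\}$.

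The one step that deserves care is evaluating this simplex volume, which I expect to be the only non-cosmetic point. I would establish that it equals $\frac{(T-S)^n}{n!}$ either by a short induction on $n$, integrating out the innermost variable from $u_{n-1}$ to $T$ and recognizing the resulting recursion, or, more transparently, by the symmetry argument that the $n!$ strict orderings of $n$ coordinates partition the cube $[S,T]^n$ up to a set of measure zero, each ordered region carrying equal volume. Either way, combining with the previous display yields $\mathbf{X}_J^n = \frac{v^{\otimes n}}{(T-S)^n}\cdot\frac{(T-S)^n}{n!} = \frac{(X_T-X_S)^{\otimes n}}{n!}$, which is the claimed identity for $S^n(X)$.

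Finally I would read off the coordinate form: the $I=(i_1,\dots,i_n)$ component of $(X_T-X_S)^{\otimes n}$ is the product $\prod_{j=1}^n (X_T^{(i_j)}-X_S^{(i_j)})$ of the corresponding increments, so dividing by $n!$ recovers the stated expression for $S^I$ (which matches the displayed formula under the usual normalization $X_S=0$). As a consistency check, this is exactly the degree-$n$ term of the tensor exponential $\exp^{\otimes}(X_T-X_S)=\sum_{n\ge 0} v^{\otimes n}/n!$, recovering the well-known fact that the full signature of a straight line is the tensor exponential of its increment.
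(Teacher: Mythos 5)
Your proof is correct and complete. Note that the paper itself states this lemma without proof (it is a standard fact from rough path theory, deferred to the cited references), so there is no authorial argument to compare against; your direct computation is the canonical one. The three ingredients are all in order: the constant derivative $dX_t = \frac{1}{T-S}(X_T - X_S)\,dt$ of the linear parameterization, multilinearity of the tensor product to extract $v^{\otimes n}$ from the iterated integral, and the simplex volume $\mathrm{vol}\{S < u_1 < \cdots < u_n < T\} = \frac{(T-S)^n}{n!}$, for which your symmetry argument (the $n!$ orderings tile the cube up to measure zero) is the cleanest justification. You were also right to flag that the paper's coordinate formula $S^I = \frac{1}{n!}\prod_{j=1}^n X_T^{(i_j)}$ only matches the tensor form under the normalization $X_S = 0$; as written in the paper the two displayed equations are equivalent only for paths started at the origin, and your tensor-level identity $\mathbf{X}_J^n = \frac{(X_T - X_S)^{\otimes n}}{n!}$ is the correct general statement. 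Your closing consistency check against the tensor exponential also agrees with the paper's subsequent lemma expressing the signature of a piecewise linear path as $\otimes_{i=1}^{l}\exp(X_i)$.
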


The signature of a path can be interpreted as a solution to the controlled differential equation driven by a path in the tensor algebra space. 
\begin{theorem}\cite{lyons2007differential}\label{SigConrtolledEquation}
Let $X \in BV( [0, T], E)$. Define $f: T^{(n)} \rightarrow L(E, T^{n}(E))$ by
\begin{eqnarray}
f(a_{0}, a_{1}, \cdots, a_{n}) x = (0, a_{0} \otimes x, a_{1} \otimes x, \cdots, a_{n-1} \otimes x).
\end{eqnarray}
Then the unique solution to the differential equation
\begin{equation}
    dS_{t} = f(S_{t})dX_{t}, S_{0} = (1, 0, \cdots, 0),
\end{equation}
is the path $S: [0, T] \rightarrow T^{(n)}(E)$ defined for all $t \in [0, T]$ by
\begin{eqnarray*}
S_{t} = S_{n}(X_{[0, t]}) = (1,  \mathbf{X}_{[0, t]}^1, \dots, \mathbf{X}_{[0, t]}^n)).
\end{eqnarray*}
\end{theorem}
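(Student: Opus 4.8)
The plan is to verify directly that the candidate path $S_t = S_n(X_{[0,t]})$ solves the stated equation, and then to use the linearity of $f$ to obtain uniqueness. The engine of the argument is the recursive integral structure of the signature levels. Starting from the definition of $\mathbf{X}^k_{[0,t]}$ as a $k$-fold iterated integral and peeling off the outermost integration variable $u_k=s$, one obtains
$$\mathbf{X}^k_{[0,t]} = \int_0^t \mathbf{X}^{k-1}_{[0,s]} \otimes dX_s, \qquad k \geq 1,$$
with the convention $\mathbf{X}^0_{[0,t]} = 1$. Since $X$ has bounded variation this Riemann--Stieltjes integral is well defined and the integrand $s \mapsto \mathbf{X}^{k-1}_{[0,s]}$ is continuous, so the fundamental theorem of calculus yields the differential form $d\mathbf{X}^k_{[0,t]} = \mathbf{X}^{k-1}_{[0,t]} \otimes dX_t$.

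With this recursion in hand the verification is componentwise. Writing $S_t = (a_0(t), \ldots, a_n(t))$ with $a_0(t) = 1$ and $a_k(t) = \mathbf{X}^k_{[0,t]}$, the definition of $f$ shows that the $k$-th component of $f(S_t)\,dX_t$ equals $a_{k-1}(t) \otimes dX_t = \mathbf{X}^{k-1}_{[0,t]} \otimes dX_t$ for $1 \leq k \leq n$, and equals $0$ for $k=0$. By the recursion these match $da_k(t) = d\mathbf{X}^k_{[0,t]}$ term by term, while $da_0 = d(1) = 0$; note that the truncation at level $n$ is consistent with $f$ only invoking $a_0,\ldots,a_{n-1}$. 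The initial condition is immediate, since the degenerate iterated integrals over $[0,0]$ vanish for $k\geq 1$, giving $S_0 = (1,0,\ldots,0)$. Hence $S_t$ is a solution.

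For uniqueness I would exploit that $f$ is a bounded \emph{linear} map on the finite-dimensional truncated tensor algebra $T^{(n)}(E)$ and that $X$ is of bounded variation, so the equation reads as the linear integral equation $S_t = S_0 + \int_0^t f(S_s)\,dX_s$. Given two solutions $S$ and $\tilde S$, their difference $\Delta_t = S_t - \tilde S_t$ satisfies $\Delta_t = \int_0^t f(\Delta_s)\,dX_s$, whence $|\Delta_t| \leq \|f\|\int_0^t |\Delta_s|\,d\|X\|_{[0,s]}$ with $\|f\|$ the operator norm. Gronwall's inequality applied against the total-variation control then forces $\Delta \equiv 0$.

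The main obstacle is bookkeeping rather than depth: one must justify carefully that peeling off the outer integral reproduces the defining iterated integral, and that the Riemann--Stieltjes calculus (both the fundamental theorem and Gronwall's inequality) is valid against a bounded-variation, rather than merely smooth, control $X$. Once the recursion $d\mathbf{X}^k = \mathbf{X}^{k-1} \otimes dX$ is established rigorously, the linearity of $f$ delivers both existence and uniqueness cleanly.
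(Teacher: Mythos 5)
Your proposal is correct: the level-peeling recursion $\mathbf{X}^k_{[0,t]} = \int_0^t \mathbf{X}^{k-1}_{[0,s]} \otimes dX_s$ verifies existence componentwise, and the Gronwall argument against the total-variation control handles uniqueness of the linear equation. The paper itself gives no proof, deferring entirely to the cited reference \cite{lyons2007differential}, and your argument is essentially the standard one found there, so there is nothing to reconcile.
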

Formally the signature of the path $X_{0, t}$ as a function from $[0, T]$ to $T((E))$ is the solution to the following equation
\begin{eqnarray*}
d S(X)_{[0, t]} = S(X_{[0, t]}) \otimes dX_{t}, S(X_{[0, 0]}) = 1. 
\end{eqnarray*}
\subsection{Multiplicative Property}
The signature of paths of finite length (bounded variation) has the multiplicative property, also called Chen's identity. 
\begin{definition}
Let $X \in BV([0, s], E)$ and $Y \in BV([s, t], E)$ be two continuous paths. Their concatenation is the path denoted by $X * Y \in BV([0, t], E)$ defined by
\begin{eqnarray*}
(X*Y)_{u} = \begin{cases} X_{u}, &  u \in [0,s], \\ Y_{u}-Y_{s}+X_{s}, & u \in [s, t]. \end{cases}
\end{eqnarray*}
\end{definition}
\begin{theorem}[Chen's identity]\label{Chen}. Let  $X \in BV([0, s], E)$ and $Y \in BV([s, t], E)$. Then
\begin{eqnarray*}
S(X*Y) = S(X) \otimes S(Y).
\end{eqnarray*}
\end{theorem}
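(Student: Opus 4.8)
The plan is to prove Chen's identity by exploiting the characterization of the signature as the unique solution of the controlled differential equation in Theorem~\ref{SigConrtolledEquation}, together with the uniqueness of that solution. Since the tensor product $S(X)\otimes S(Y)$ truncated at level $n$ depends only on $S_{n}(X)$ and $S_{n}(Y)$, it suffices to prove the identity at each truncation level $n$, i.e.\ to work in $T^{(n)}(E)$; letting $n\to\infty$ then yields the full statement in $T((E))$. Write $Z := X*Y \in BV([0,t],E)$ and recall that, by the definition of concatenation, $Z_{u} = X_{u}$ for $u\in[0,s]$ while $dZ_{u} = dY_{u}$ for $u\in[s,t]$ (the shift by the constant $X_{s}-Y_{s}$ does not affect the differential, and continuity holds since $Z_{s}=X_{s}$).

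First I would run the signature flow along $Z$. For $u \in [0,s]$ the paths $Z$ and $X$ coincide, so $S_{n}(Z_{[0,u]}) = S_{n}(X_{[0,u]})$, and in particular $S_{n}(Z_{[0,s]}) = S_{n}(X)$. For $u\in[s,t]$, since $dZ_{u} = dY_{u}$, Theorem~\ref{SigConrtolledEquation} tells me that $V_{u} := S_{n}(Z_{[0,u]})$ solves the linear equation
\begin{equation*}
dV_{u} = V_{u} \otimes dY_{u}, \qquad V_{s} = S_{n}(X).
\end{equation*}
Next I would exhibit a second solution of this same initial value problem, built from the two factors. Set $W_{u} := S_{n}(X) \otimes S_{n}(Y_{[s,u]})$ for $u\in[s,t]$. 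Because $S_{n}(X)$ is a fixed element of $T^{(n)}(E)$, differentiating and using that $S_{n}(Y_{[s,u]})$ itself solves the signature CDE on $[s,t]$ gives
\begin{equation*}
dW_{u} = S_{n}(X)\otimes dS_{n}(Y_{[s,u]}) = S_{n}(X)\otimes S_{n}(Y_{[s,u]}) \otimes dY_{u} = W_{u} \otimes dY_{u},
\end{equation*}
with $W_{s} = S_{n}(X)\otimes S_{n}(Y_{[s,s]}) = S_{n}(X)\otimes 1 = S_{n}(X)$.

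Thus $V$ and $W$ satisfy the same linear CDE on $[s,t]$ with the same value at $u=s$, so by the uniqueness asserted in Theorem~\ref{SigConrtolledEquation} they coincide; evaluating at $u=t$ yields $S_{n}(X*Y) = S_{n}(X)\otimes S_{n}(Y)$, and letting $n\to\infty$ gives the claim. The main obstacle I anticipate is the tensor-algebra bookkeeping: I must justify the Leibniz-type step $dW_{u} = S_{n}(X)\otimes dS_{n}(Y_{[s,u]})$ (multiplication by the fixed factor $S_{n}(X)$ commutes with the integral) and check that the uniqueness theorem genuinely applies to the ``prefactored'' equation $dV_{u}=V_{u}\otimes dY_{u}$, which is exactly the form appearing in Theorem~\ref{SigConrtolledEquation} once the driving vector field $f$ is read off correctly.

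An entirely self-contained alternative, avoiding the ODE machinery, is the direct combinatorial argument. Here I would expand the level-$n$ iterated integral of $Z$ over the simplex $\{0<u_{1}<\cdots<u_{n}<t\}$, partition the simplex according to the unique index $k$ with $u_{k}\le s<u_{k+1}$, and substitute $dZ_{u_{i}}=dX_{u_{i}}$ for $u_{i}\le s$ and $dZ_{u_{i}}=dY_{u_{i}}$ for $u_{i}>s$. By Fubini each piece factors as $\mathbf{X}^{k}_{[0,s]}\otimes\mathbf{Y}^{n-k}_{[s,t]}$, and summing over $k=0,\dots,n$ reproduces the level-$n$ component of $S(X)\otimes S(Y)$. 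In this route the only delicate points are that the overlaps of the partition pieces have Lebesgue measure zero and that the integral genuinely factorizes across the split at $s$.
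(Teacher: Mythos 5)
Your proof is correct, but note that the paper itself offers no proof of Theorem \ref{Chen}: the identity is stated as a known result (the standard reference being \cite{lyons2007differential}), followed only by the remark that it expresses the homomorphism property of the signature. So there is no ``paper proof'' to match against; what you give is a genuine proof, in two variants, and both are sound. The CDE route fits the paper's framework best, since it reuses Theorem \ref{SigConrtolledEquation}: you run the signature flow of $Z=X*Y$ through the junction time $s$ and identify it, by uniqueness for the linear equation $dV_u=V_u\otimes dY_u$, with $u\mapsto S_n(X)\otimes S_n(Y_{[s,u]})$. The one point you rightly flag is that Theorem \ref{SigConrtolledEquation} asserts existence and uniqueness only for the initial condition $(1,0,\dots,0)$ at time $0$, while you need uniqueness for the initial value $S_n(X)$ at time $s$. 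This extension is genuinely routine here because the truncated system is graded: with $f$ as in the paper, the level-$0$ component satisfies $d\mathbf{V}^0_u=0$, hence is constant, and at level $k$ the equation reads $d\mathbf{V}^k_u=\mathbf{V}^{k-1}_u\otimes dY_u$, so the solution is determined inductively by plain integration from any initial value; spelling this out closes the gap without Picard or Gronwall machinery. The Leibniz step $dW_u=S_n(X)\otimes dS_n(Y_{[s,u]})$ is likewise harmless, since left multiplication by a fixed tensor is linear and continuous on the truncated tensor algebra and so commutes with Riemann--Stieltjes integration. Your second, combinatorial route --- partitioning the order simplex according to how many integration variables fall in $[0,s]$ and applying Fubini --- is Chen's original argument; it is fully self-contained and directly yields the level-wise identity $\mathbf{Z}^n_{[0,t]}=\sum_{k=0}^{n}\mathbf{X}^k_{[0,s]}\otimes\mathbf{Y}^{n-k}_{[s,t]}$, which is exactly the $n$-th component of $S(X)\otimes S(Y)$. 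Either variant would serve as a proof the paper could include: the first buys coherence with the paper's CDE viewpoint, the second buys independence from any uniqueness theorem.
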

Chen's identity asserts that the signature is a homomorphism between the path space and the signature space.

The multiplicative properties of the signature allows us to compute the truncated signature of a piecewise linear path.
 
\begin{lemma}
Let $X$ be a $E$-valued piecewise linear path, i.e. $X$ is the concatenation of a finite number of linear  paths,  and  in  other  words  there exists a positive integer $l$ and linear paths $X_1,X_2, \cdots,X_l$ such that $X=X_1*X_2* \cdots *X_l$. Then
\begin{eqnarray}
S(X) = \otimes_{i = 1}^{l} \exp(X_{i}).
\end{eqnarray}
\end{lemma}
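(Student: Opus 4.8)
The plan is to derive this identity as an immediate consequence of the two preceding results: the explicit formula for the signature of a single linear path, and Chen's identity (Theorem~\ref{Chen}). The only preliminary point to pin down is the meaning of $\exp(X_i)$. Since each $X_i$ is linear, the preceding lemma on linear paths gives $S^{n}(X_i) = (\Delta_i)^{\otimes n}/n!$ for its increment $\Delta_i := X_{i,T} - X_{i,S} \in E$, so summing over all grades $n \geq 0$ yields $S(X_i) = \sum_{n \geq 0} (\Delta_i)^{\otimes n}/n!$. This series has exactly one term at each tensor grade, hence is a well-defined element of $T((E))$, and it is precisely the tensor exponential of $\Delta_i$. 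I would therefore read $\exp(X_i)$ as $\exp(\Delta_i)$, so that the lemma asserts that concatenation of paths corresponds to tensor multiplication of the associated exponentials.

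First I would establish the base case $l = 1$, which is exactly the single linear-path lemma combined with this identification, giving $S(X_1) = \exp(X_1)$. Then I would proceed by induction on the number $l$ of linear pieces. For the inductive step, write $X = (X_1 * \cdots * X_{l-1}) * X_l$, using associativity of concatenation so that the first $l-1$ pieces form a single piecewise linear path $Y := X_1 * \cdots * X_{l-1}$. Applying Chen's identity to the pair $(Y, X_l)$ gives $S(X) = S(Y) \otimes S(X_l)$. The inductive hypothesis yields $S(Y) = \otimes_{i=1}^{l-1} \exp(X_i)$ and the base case gives $S(X_l) = \exp(X_l)$; combining these, together with the associativity of the tensor multiplication in $T((E))$, produces $S(X) = \otimes_{i=1}^{l} \exp(X_i)$ and closes the induction.

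There is no genuinely hard step here: the mathematical content is carried entirely by the two cited results, and the argument is a clean induction. The only points requiring care are bookkeeping ones, namely checking that the concatenation $X_1 * \cdots * X_l$ is associative so that the regrouping $(X_1 * \cdots * X_{l-1}) * X_l$ in the inductive step is legitimate (this follows directly from the definition of $*$, since the successive re-basepointing of each piece is compatible with regrouping), and confirming that the tensor exponential series is exactly what the single linear-path lemma produces level by level. With those observations in place the claimed identity follows.
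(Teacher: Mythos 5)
Your proof is correct and follows exactly the route the paper intends: the paper states this lemma immediately after Chen's identity with the remark that it follows from the multiplicative property together with the preceding lemma giving $S(X_i)=\exp(X_{i,T}-X_{i,S})$ for a linear piece, which is precisely your induction. Your clarification that $\exp(X_i)$ must be read as the tensor exponential of the increment $\Delta_i$ is a correct and worthwhile tightening of the paper's slightly abusive notation.
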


\subsection{Uniqueness of the signature}
Let us start with introducing the definition of the tree-like path.
\begin{definition}[Tree-like Path]\label{def_tree_like}
A path $X \in BV(J, E)$ is tree-like if there exists a continuous function $h: J \rightarrow [0, +\infty)$ such that $h(S) = h(T) = 0$ and such that, for all $s, t \in J$ with $s \leq t$, 
\begin{eqnarray*}
\vert\vert X_{t} - X_{s}\vert\vert \leq h(s) + h(t) - 2 \inf_{u \in [s, t]} h(u).
\end{eqnarray*}
\end{definition}
Intuitively a tree-like path is a trajectory in which there is a section where the path exactly retraces itself. The tree-like equivalence is defined as follows: we say that two paths $X$ and $Y$ are the same up to the tree-like equivalence if and only if the concatenation of $X$ and the inverse of $Y$ is tree-like. Now we are ready to characterize the kernel of the signature transformation.
\begin{theorem}[Uniqueness of the signature]\label{UniquenessOfSig}
	Let $X \in BV(J,E)$ . Then $S(X)$ determines $X$ up to the tree-like equivalence defined in Definition \ref{def_tree_like}.\cite{UniquenessOfSignature}  
\end{theorem}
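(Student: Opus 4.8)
The plan is to reduce the theorem to a statement about paths whose signature is trivial, and then invoke the structural characterisation of such paths. The reduction step is the part I can carry out cleanly from the results already in the excerpt; the characterisation itself is the genuine difficulty.

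First I would use that the signature is a homomorphism. Writing $\overleftarrow{Y}$ for the reversal of a path $Y$, a direct computation from the definition of the signature (reversing the order of integration) gives $S(\overleftarrow{Y}) = S(Y)^{-1}$ in the tensor algebra. Combining this with Chen's identity (Theorem~\ref{Chen}), for any $X, Y \in BV(J,E)$ with $S(X) = S(Y)$ one obtains $S(X \ast \overleftarrow{Y}) = S(X) \otimes S(Y)^{-1} = \mathbf{1}$, where $\mathbf{1} := (1, 0, 0, \dots)$ is the identity. By the definition of tree-like equivalence recalled above ($X$ and $Y$ are equivalent exactly when $X \ast \overleftarrow{Y}$ is tree-like), the theorem reduces to the single claim that a path $Z \in BV(J,E)$ is tree-like if and only if $S(Z) = \mathbf{1}$.

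One direction of this claim is elementary. If $Z$ is tree-like with height function $h$ from Definition~\ref{def_tree_like}, then $Z$ factors through an $\mathbb{R}$-tree and every excursion away from the root is cancelled by its own reversal; approximating $Z$ by piecewise linear paths adapted to the level sets of $h$ and repeatedly using $S(A \ast \overleftarrow{A}) = \mathbf{1}$, one checks that each level of the truncated signature converges to zero, so $S(Z) = \mathbf{1}$.

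The converse, trivial signature implies tree-like, is the main obstacle and is precisely the Hambly--Lyons theorem \cite{UniquenessOfSignature}. I would argue by contraposition: assuming $Z$ is not tree-like, I would exhibit a quantity computable from $S(Z)$ that is non-zero. The route I would take is the hyperbolic development. For a scaling parameter $\lambda > 0$, develop $\lambda Z$ into the isometry group of the hyperbolic plane by solving the linear controlled differential equation whose solution is determined by the signature through Theorem~\ref{SigConrtolledEquation}; the endpoint of this development is then an explicit function of $S(Z)$. The key analytic estimate is that the hyperbolic distance travelled by the developed endpoint grows with the length of the \emph{tree-reduced} path associated with $Z$, and that a non-tree-like $Z$ has strictly positive reduced length; letting $\lambda \to \infty$ forces the development, hence $S(Z)$, to be non-trivial. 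The two technical hearts of the argument, which I would cite rather than reproduce, are (i) constructing the canonical tree-reduction and showing that the quotient of $BV(J,E)$ by tree-like equivalence is a well-defined group, and (ii) establishing the displacement-versus-reduced-length estimate uniformly in $\lambda$.
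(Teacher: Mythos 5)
The paper does not prove this theorem itself: it states it as a known result and defers entirely to the citation \cite{UniquenessOfSignature} (Hambly--Lyons). Your proposal is correct and follows essentially that same route, since your reduction via Chen's identity (Theorem~\ref{Chen}) and path reversal to the claim that a trivial signature characterises tree-like paths, followed by the hyperbolic-development argument for the hard direction, is precisely the structure of the proof in \cite{UniquenessOfSignature} on which the paper relies.
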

Theorem \ref{UniquenessOfSig} shows that the signature of the path can recover the path trajectory under a mild condition. The uniqueness of the signature is important, as it ensures it to be a discriminative feature set of un-parameterized streamed data.
\begin{remark}
	A simple sufficient condition for the uniqueness of the signature of a path of finite length is that one component of $X$ is monotone. Thus the signature of the time-joint path determines its trajectory (see \cite{levin2013learning}). 
\end{remark}

\subsection{Invariance under time parameterization}
\begin{lemma}[Invariance under time parameterization]\label{SigTimeInvariance}\cite{lyons2007differential}
	Let $X \in V_{1}(J, E)$ and a path $\tilde{X}: J \rightarrow E$ be a time re-parameterization of $X$. Then 
	\begin{eqnarray}\label{sig_time_inv}
	S(X_{J}) = S(\tilde{X}_{J}).
	\end{eqnarray} 
\end{lemma}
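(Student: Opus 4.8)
The plan is to prove the identity level by level: since $S(X_J) = (1, \mathbf{X}^1_J, \mathbf{X}^2_J, \dots)$ and likewise for $\tilde X$, it suffices to show $\tilde{\mathbf{X}}^k_J = \mathbf{X}^k_J$ for every $k \ge 1$, where $\tilde X_t = X_{\lambda(t)}$ and $\lambda: J \to J$ is the given non-decreasing surjection. First I would record the regularity of $\lambda$ that the argument relies on. Because $\lambda$ is a non-decreasing surjection onto the interval $J = [S,T]$, it is automatically continuous (a jump would omit an interval of values, contradicting surjectivity) and satisfies $\lambda(S) = S$, $\lambda(T) = T$. It need not be injective: $\lambda$ may be constant on subintervals, on which $\tilde X$ is constant and hence contributes nothing to any Stieltjes integral.

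The core step is the change-of-variables formula for Riemann--Stieltjes integrals of bounded-variation paths. For the first level, writing $u = \lambda(t)$,
$$
\tilde{\mathbf{X}}^1_J = \int_S^T d\tilde X_t = \int_S^T dX_{\lambda(t)} = \int_S^T dX_u = X_T - X_S = \mathbf{X}^1_J,
$$
where the flat parts of $\lambda$ drop out since $dX_{\lambda(t)} = 0$ there. For general $k$ I would induct using the recursive presentation $\mathbf{X}^k_{[S,t]} = \int_S^t \mathbf{X}^{k-1}_{[S,u]} \otimes dX_u$ together with its analogue for $\tilde X$; the induction hypothesis in the strengthened form $\tilde{\mathbf{X}}^{k-1}_{[S,t]} = \mathbf{X}^{k-1}_{[S,\lambda(t)]}$ (for all $t$, not merely $t = T$) lets the same substitution $u = \lambda(t)$ collapse the $\tilde X$-integral onto the $X$-integral.

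An alternative and in some ways cleaner route uses Theorem \ref{SigConrtolledEquation}: the truncated signature path $t \mapsto S_{n}(X_{[S,t]})$ is the unique solution of $dS_t = f(S_t)\,dX_t$ with $S_S = 1$. Setting $R_t := S_{n}(X_{[S,\lambda(t)]})$ and applying the Stieltjes chain rule along $\lambda$ gives $dR_t = f(R_t)\,dX_{\lambda(t)} = f(R_t)\,d\tilde X_t$ with $R_S = 1$, so $R$ solves the same controlled differential equation driven by $\tilde X$. Uniqueness then yields $S_{n}(\tilde X_{[S,t]}) = S_{n}(X_{[S,\lambda(t)]})$, and evaluating at $t = T$ with $\lambda(T) = T$ proves the claim for every truncation level $n$, hence for the full signature.

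I expect the main obstacle to be precisely the non-injectivity of $\lambda$: the substitution $u = \lambda(t)$ is not a classical change of variables when $\lambda$ has flat sections, so the argument must justify that these sections carry zero mass for the driving measure $dX_{\lambda(\cdot)}$ and that the induced map on the ordered simplex $\{S < t_1 < \dots < t_k < T\} \to \{S \le u_1 \le \dots \le u_k \le T\}$ respects the integration domain. Once one checks that $dX_{\lambda(t)}$ is exactly the Stieltjes measure compatible with $dX_u$ under $\lambda$ and that the constancy intervals contribute nothing, reparameterization invariance follows cleanly at all levels.
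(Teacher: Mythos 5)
Your proof is correct, but the comparison with the paper is asymmetric in one important respect: the paper does not prove this lemma at all — it is quoted from the rough-path literature with the citation \cite{lyons2007differential}, and the surrounding text only discusses its consequences for dimension reduction. Your argument is therefore a self-contained substitute for that citation, and both routes you sketch are sound. The first route (levelwise induction with the strengthened hypothesis $\tilde{\mathbf{X}}^{k-1}_{[S,t]} = \mathbf{X}^{k-1}_{[S,\lambda(t)]}$, reduced to a Riemann--Stieltjes change of variables along the continuous non-decreasing surjection $\lambda$) is the classical textbook proof; you correctly identify that the only delicate point is the non-injectivity of $\lambda$, and the way to make it precise is via Riemann sums: a partition on either side pushes forward or pulls back to a partition on the other with identical Stieltjes sums, up to terms supported on constancy intervals of $\lambda$, where $d(X \circ \lambda)$ vanishes. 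The second route, via Theorem \ref{SigConrtolledEquation}, is attractive in the context of this paper because it reuses machinery already stated in the appendix: $R_t := S_n(X_{[S,\lambda(t)]})$ solves the controlled equation driven by $\tilde{X}$ with $R_S = 1$, so uniqueness forces $R_t = S_n(\tilde{X}_{[S,t]})$. Two small points would tighten it: (i) for the uniqueness theorem to apply to the equation driven by $\tilde{X}$ you should note explicitly that $\tilde{X} = X \circ \lambda \in BV(J,E)$, which holds because composing with a non-decreasing map cannot increase total variation; and (ii) the ``Stieltjes chain rule along $\lambda$'' invoked there is exactly the same change-of-variables lemma as in your first route, so the CDE argument does not bypass that technical core — it merely packages the induction over signature levels into the uniqueness statement.
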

Re-parameterizing a path inside the interval does not change its signature. As can be seen in Figure \ref{TimeParameterizationInvariance}, speed changes result in different time series representation but the same signature feature. It means that signature feature can reduce dimension massively by removing the redundancy caused by the speed of traversing the path. It is very useful for applications where the output is invariant w.r.t. the speed of an input path, e.g. online handwritten character recognition and video classification. 

\subsection{Shuffle Product Property} We introduce a special class of linear forms on $T((E))$; Suppose $(e^{*}_{n_{1}}, \cdots, e^{*}_{n_{d}}, \cdots) $ are elements of $E^{*}$. We can introduce coordinate iterated integrals by setting
$X^{(i)}_{u}:= \langle e^{*}_{i}, X_{u} \rangle $,
and rewriting 
$\langle e_{i_{1}}^{*} \otimes \cdots \otimes e_{i_{n}}^{*}, S(X) \rangle$ as the scalar iterated integral of coordinate projection. In this way, we realize the $n^{th}$ degree coordinate iterated integrals as the restrictions of linear functionals in $E^{\otimes n}$ to the space of signatures of paths. If $(e_1,\cdots,e_d )$ is a basis for a finite dimensional space $E$, and $(e_1^{*}, \cdots ,e_d^{*})$ is a basis for the dual $E^{*}$ it therefore follows that
\begin{eqnarray*}
\mathbf{X}_{J} = \sum_{ \substack{k \geq 0\\i_{1}, \cdots, i_{k}  \\ \in 1, 2, \cdots, d\}}}\underset{\underset{u_{1}, \dots, u_{k} \in J}{u_{1} < \dots < u_{k}}} { \int \dots \int} dX_{u_{1}}^{(i_{1})} \otimes \dots \otimes dX_{u_{k}}^{(i_{k})} e_{n_{1}}\otimes \cdots \otimes e_{n_{k}}.
\end{eqnarray*}

\begin{theorem}[Shuffle Algebra]
The linear forms on $T((E))$ induced by $T(E^{*})$, when restricted to the range $S(BV([0, T], E)$ of the signature, form an algebra of real valued functions of bounded variation.
\end{theorem}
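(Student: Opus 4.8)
The plan is to reduce the statement to the classical shuffle-product identity for iterated integrals and then verify closure under pointwise multiplication. Fixing the basis $(e_1^*,\dots,e_d^*)$ of $E^*$, every linear form induced by $T(E^*)$ restricts, on the range of the signature, to a finite linear combination of the coordinate functionals $\Phi_I \colon X \mapsto X_J^I$, where $I=(i_1,\dots,i_n)$ ranges over multi-indices and $X_J^I$ is the scalar iterated integral introduced above. Each $\Phi_I$ is manifestly real-valued and, because $X \in BV(J,E)$, is a finite Riemann--Stieltjes integral, hence well defined. The family of such functions is a vector space containing the constants (the empty word gives $\Phi_\emptyset \equiv 1$), so it remains only to show it is closed under multiplication; by bilinearity it suffices to treat a single product $\Phi_I \cdot \Phi_J$.

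The key step is the shuffle identity
\begin{equation}\label{shuffle_id}
\Phi_I(X)\,\Phi_J(X) = \sum_{K \in I \shuffle J} \Phi_K(X),
\end{equation}
where $I \shuffle J$ denotes the multiset of shuffles of the words $I$ and $J$. To prove \eqref{shuffle_id} I would write the left-hand side as a single integral over the product of simplices $\Delta_I \times \Delta_J$, with $\Delta_I = \{S \le u_1 < \dots < u_{|I|} \le T\}$ and similarly for $\Delta_J$. Decomposing this product region according to the relative order of the merged time variables $u_1,\dots,u_{|I|},v_1,\dots,v_{|J|}$ partitions $\Delta_I \times \Delta_J$, up to the closed locus where two coordinates coincide, into sub-simplices indexed exactly by the interleavings of the two orderings. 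On each sub-simplex the integrand is an iterated integral of $X$ in the merged order, i.e. $\Phi_K$ for the corresponding shuffle $K$; summing over interleavings yields the right-hand side. The coincidence locus is a finite union of hyperplanes, which carries zero mass because $X$ is continuous, so the Stieltjes measures $dX^{(i)}$ are non-atomic and the product measure of the diagonals vanishes. This is precisely where the continuity/bounded-variation hypothesis enters.

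Granting \eqref{shuffle_id}, the product of any two coordinate functionals again lies in the linear span of the $\Phi_K$, and by bilinearity the same holds for arbitrary linear forms from $T(E^*)$ restricted to signatures; hence this family of real-valued functions on $BV(J,E)$ is closed under addition and multiplication and contains the constants, so it forms an algebra as claimed. The main obstacle is the careful justification of the decomposition underlying \eqref{shuffle_id}: one must establish both the combinatorial claim that interleavings index a genuine partition of $\Delta_I \times \Delta_J$ and the analytic claim that the coincidence hyperplanes are null for the product Stieltjes measure. An alternative, likely cleaner, route avoids explicit region-splitting and instead proves \eqref{shuffle_id} by induction on $|I|+|J|$ via integration by parts, using the recursive relation $d S(X)_{[0,t]} = S(X_{[0,t]}) \otimes dX_t$ from Theorem \ref{SigConrtolledEquation}; this converts the measure-theoretic step into a boundary-term computation, which I would keep as a fallback should the null-set argument become delicate.
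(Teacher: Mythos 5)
Your proposal follows essentially the same route as the paper's proof (which cites Lyons--Caruana--L\'evy and is based on Fubini's theorem): the core of both arguments is the shuffle identity $e^{*}(\mathbf{a})\, f^{*}(\mathbf{a}) = (e^{*} \shuffle f^{*})(\mathbf{a})$ for signatures $\mathbf{a}$, and your decomposition of $\Delta_I \times \Delta_J$ into sub-simplices indexed by interleavings is exactly the Fubini argument the paper invokes. The reduction to coordinate functionals and the closure-under-multiplication conclusion are the standard packaging of that identity, so your proof is correct and matches the paper's approach.
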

The proof can be found in page 35 in \cite{lyons2007differential}. The proof is based on the Fubini theorem, and it is to show that for any $e^{*}, f^{*} \in T(E^{*})$, such that for all $\mathbf{a} \in S(\mathcal{V}^{p}([0, T], E)$,
\begin{equation}
    e^{*}(\mathbf{a}) f^{*}(\mathbf{a}) = (e^{*} \shuffle f^{*})(\mathbf{a})
\end{equation}
\subsection{Universality of the signature}
Any functional on the path can be rewritten as a function on the signature based on the uniqueness of the signature (Theorem \ref{UniquenessOfSig}). The signature of the path has the universality property, i.e. that any continuous functional on the signature can be well approximated by linear functionals on the signature (Theorem \ref{SigApproximatinTheorem})\cite{levin2013learning}.

\begin{theorem}[Signature Approximation Theorem]\label{SigApproximatinTheorem}
	Suppose $f: S_{1} \rightarrow \mathbb{R}$ is a continuous function, where $S_{1}$ is a compact subset of $S(BV(J,E))$\footnote{ $S(BV(J,E))$ denotes the range of the signature for $x \in BV(J,E)$.}. Then $\forall \varepsilon >0$, there exists a linear functional $L \in T((E))^{*}$ such that
	\begin{eqnarray}
	\sup_{a \in S_{1}}\vert \vert f(a) - L(a) \vert \vert \leq \varepsilon. 
	\end{eqnarray}
\end{theorem}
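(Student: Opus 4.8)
The plan is to realize this as a direct consequence of the Stone--Weierstrass theorem. Let $\mathcal{A}$ denote the set of functions on $S_{1}$ obtained by restricting the linear functionals induced by $T(E^{*})$ to $S_{1}$; that is, $\mathcal{A} = \{\, a \mapsto L(a) : L \in T(E^{*}) \,\}$, viewed as a subset of $C(S_{1}, \mathbb{R})$. Since $S_{1}$ is compact by hypothesis and each coordinate functional is continuous on $T((E))$, every element of $\mathcal{A}$ is a continuous real-valued function on $S_{1}$, so $\mathcal{A} \subseteq C(S_{1}, \mathbb{R})$. The goal is to verify that $\mathcal{A}$ satisfies the hypotheses of Stone--Weierstrass, whence $\mathcal{A}$ is dense in $C(S_{1}, \mathbb{R})$ for the uniform norm; the stated approximation of $f$ by a single linear functional $L$ then follows at once.

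There are four points to check. First, $\mathcal{A}$ is a vector space, which is immediate from the linearity of the functionals. Second, $\mathcal{A}$ is closed under pointwise multiplication: this is precisely the shuffle product property, since for $e^{*}, f^{*} \in T(E^{*})$ one has $e^{*}(a)\, f^{*}(a) = (e^{*} \shuffle f^{*})(a)$ for all $a \in S_{1}$, and $e^{*} \shuffle f^{*} \in T(E^{*})$, so the product of two elements of $\mathcal{A}$ is again in $\mathcal{A}$; combined with the first point, $\mathcal{A}$ is a subalgebra. Third, $\mathcal{A}$ contains the constants, because the functional associated with the empty word returns the degree-zero component of the signature, which equals $1$ for every signature, so the constant function $1$ lies in $\mathcal{A}$ and hence all constants do. Fourth, $\mathcal{A}$ separates points: if $a \neq b$ in $S_{1} \subseteq T((E))$, then $a$ and $b$ differ in some tensor coordinate, and the corresponding coordinate projection is an element of $T(E^{*})$ taking different values on $a$ and $b$.

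With these in hand, the Stone--Weierstrass theorem applies to the compact Hausdorff space $S_{1}$ and the point-separating subalgebra $\mathcal{A} \ni 1$, giving density of $\mathcal{A}$ in $C(S_{1}, \mathbb{R})$ under $\|\cdot\|_{\infty}$. Thus for any continuous $f$ and any $\varepsilon > 0$ there exists $L \in T(E^{*}) \subset T((E))^{*}$ with $\sup_{a \in S_{1}} |f(a) - L(a)| \leq \varepsilon$, which is the claim. I expect the only non-routine ingredient to be the multiplicative closure in the second point: it is not obvious a priori that the pointwise product of two linear functionals on the signature is again \emph{linear} in the signature, and this is exactly what the Shuffle Algebra theorem supplies. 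The remaining steps --- separation via coordinate projections, constants via the empty word, and the topological hypotheses --- are routine, so the crux of the argument is really the invocation of the already-established shuffle product property.
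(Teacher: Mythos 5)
Your proposal is correct and follows exactly the route the paper indicates: the paper's proof is the one-line remark that the result follows from the shuffle product property together with the Stone--Weierstrass theorem, and your argument simply fills in the routine verifications (subalgebra via the shuffle product, constants via the empty word, separation of points via coordinate projections) that this citation leaves implicit. You have correctly identified the multiplicative closure supplied by the Shuffle Algebra theorem as the only non-trivial ingredient, which is precisely the paper's intent.
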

\begin{proof}
It can be proved by the shuffle product property of the signature and the Stone-Weierstrass Theorem. 
\end{proof}

\section{The log-signature of a path}
Before introducing the log-signature, we define the Lie algebra, which the log-signature takes value in.
\subsection{Lie algebra and Lie series}
If $F_{1}$ and $F_{2}$ are two linear subspaces of $T((E))$, let us denote by $[F_{1}, F_{2}]$ the linear span of all the elements of the form $[a,b]$, where $a \in F_{1}$ and $b \in F_{2}$.
Consider the sequence $(L_{n})_{n \geq 0}$ of subspaces of $T((E))$ defined recursively as follows:
\begin{equation}
    L_{0} = 0;  \forall n \geq 1,  L_{n} = [E, L_{n-1}]. 
\end{equation}
\begin{definition}\label{def_lie_seires}
The space of Lie formal series over $E$, denoted as $\mathcal{L}((E))$ is defined as the following subspace of $T((E))$:
\begin{eqnarray}
\mathcal{L}((E)) = \{l = (l_{0}, \cdots, l_{n}, \cdots) \vert \forall n \geq 0, l_{n} \in L_{n}\}.
\end{eqnarray}
\end{definition}

\begin{theorem}[Theorem 2.23 \cite{lyons2007differential}]
Let $X \in BV(J, E)$. Then the log-signature of $X$ is a Lie series in $\mathcal{L}((E))$.
\end{theorem}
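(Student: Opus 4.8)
The plan is to reduce the statement to two facts about the signature already established in this appendix — Chen's identity (Theorem \ref{Chen}) and the exponential form of the signature of a piecewise linear path — together with the Baker--Campbell--Hausdorff (BCH) formula and a closure argument. I would proceed by an increasing hierarchy of paths: linear, piecewise linear, then general bounded variation.

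First I would dispose of the linear case. The formula $S^{n}(X) = (X_{T}-X_{S})^{\otimes n}/n!$ shows that for a linear path $S(X) = \exp(v)$ with $v := X_{T}-X_{S} \in E = L_{1} \subset \mathcal{L}((E))$, so its log-signature is $\log(\exp(v)) = v$, trivially a Lie series (concentrated in degree one). Next, for a piecewise linear path $X = X_{1} * \cdots * X_{l}$, Chen's identity and the linear case give $S(X) = \exp(v_{1}) \otimes \cdots \otimes \exp(v_{l})$ with each increment $v_{i} \in E \subset \mathcal{L}((E))$. Here I invoke the BCH formula: $\log(\exp(a)\otimes\exp(b))$ is a series of iterated Lie brackets in $a$ and $b$, every term of which lies in $\mathcal{L}((E))$ whenever $a,b \in \mathcal{L}((E))$, and the series converges in $T((E))$ because only finitely many brackets contribute at each tensor level. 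Since $\mathcal{L}((E))$ is closed under the bracket, an induction on $l$ yields $lS(X) = \log(\exp(v_{1})\otimes\cdots\otimes\exp(v_{l})) \in \mathcal{L}((E))$.

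The general bounded variation case I would then obtain by approximation and closure. Approximate $X \in BV(J,E)$ by its piecewise linear interpolations $X^{(m)}$ along partitions of vanishing mesh; the truncated signatures converge level by level, $S(X^{(m)}) \to S(X)$ in $T((E))$, and since at each tensor level $\log$ is a polynomial in the coordinates, $lS(X^{(m)}) \to lS(X)$. Because $E = \mathbb{R}^{d}$ is finite dimensional, each bracket space $L_{n} \subset E^{\otimes n}$ is a closed linear subspace, so $\mathcal{L}((E)) = \prod_{n} L_{n}$ is closed in the product topology of $T((E))$. As every $lS(X^{(m)})$ lies in $\mathcal{L}((E))$, the limit $lS(X)$ does too.

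The step I expect to be the main obstacle is the convergence $S(X^{(m)}) \to S(X)$ underpinning the final approximation: justifying that the signatures of the piecewise linear interpolants converge to that of $X$ rests on the continuity of the signature map in the bounded variation topology, the one genuinely analytic ingredient. An alternative route that bypasses the limit entirely uses the Shuffle Algebra theorem proved above: the shuffle relations say precisely that $S(X)$ is group-like for the deconcatenation coproduct, and a classical duality argument (Friedrichs' criterion) then identifies the logarithm of a group-like element as a primitive element, that is, a Lie series — trading the analytic approximation for an algebraic characterization of $\mathcal{L}((E))$ as the space of primitives.
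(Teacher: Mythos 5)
Your proof is essentially correct, but there is nothing in the paper to compare it against: the statement is imported verbatim as Theorem 2.23 of \cite{lyons2007differential}, with no internal proof given, so any valid argument is by construction ``a different route from the paper.'' The two routes you sketch are in fact the two classical ones. The first (linear case $\Rightarrow$ piecewise linear via Chen's identity and Dynkin's BCH formula, both of which the appendix states, $\Rightarrow$ general bounded variation by approximation) is the argument used in the rough-path literature, and the only real gap is the one you flag yourself: the level-by-level convergence $S(X^{(m)}) \to S(X)$ for piecewise linear interpolants. That fact is true and standard --- the interpolants converge uniformly, satisfy $\|X^{(m)}\|_{1\text{-var}} \le \|X\|_{1\text{-var}}$, hence converge in $p$-variation for every $p>1$, and each signature level is continuous in that topology --- but as written your proof is complete only modulo this unproved lemma. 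Your closure argument is sound: each $L_n$ is a linear subspace of the finite-dimensional $E^{\otimes n}$, hence closed, and the projection of $\log$ to each tensor level is polynomial in finitely many lower levels. The alternative algebraic route (shuffle identity $\Rightarrow$ $S(X)$ is group-like $\Rightarrow$ $\log S(X)$ is primitive $\Rightarrow$ Lie series, by Friedrichs' criterion) is also correct and is arguably the better fit here, since the appendix already states the Shuffle Algebra theorem for all bounded variation paths, so it closes the argument without any analytic limit. One small caution: you write $v \in E = L_1$, which is the standard convention, but the paper's recursion $L_0 = 0$, $L_n = [E, L_{n-1}]$ as literally stated would force every $L_n$ to vanish; this is a typo in the paper (the intended definition has $L_1 = E$), and your reading is the correct one.
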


\subsection{The bijection between the signature and log-signature}
Similar to the way of defining the logarithm of a tensor series, we have the exponential mapping of the element in $T((E))$ defined in a power series form.
\begin{definition}[Exponential map]
	Let $a = (a_{0}, a_{1}, \cdots) \in T((E))$. Define the exponential map denoted by $\exp$ as follows:
	\begin{eqnarray}
	\exp (a) = \sum_{n = 0}^{\infty}\frac{a^{\otimes n}}{n!}.
	\end{eqnarray}
\end{definition}
\begin{lemma}
The inverse of the logarithm on the domain $\{a \in T((E)) | a_{0} \neq 0\}$ is the exponential map.
\end{lemma}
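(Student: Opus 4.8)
The plan is to verify the two composition identities $\exp(\log(a)) = a$ for $a_0 = 1$ and $\log(\exp(b)) = b$ for $b_0 = 0$, and then to reduce the stated general case $a_0 \neq 0$ to this unipotent case by factoring out the scalar part. The essential observation is that although $T((E))$ is non-commutative, both $\exp$ and $\log$ are power series in a \emph{single} element, so every tensor product appearing in the compositions lies in the commutative unital subalgebra generated by that one element; within such a subalgebra the classical scalar power-series identities transfer directly, and no genuinely non-commutative manipulation is ever needed.

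First I would exploit the grading. Writing $t = a - 1$, we have $t_0 = 0$. Because $t$ has vanishing degree-$0$ component, the lowest nonzero graded component of $t^{\otimes n}$ sits in degree $\geq n$. Consequently, when one expands $\exp(\log(1+t))$ and collects terms of a fixed degree $k$, only finitely many summands (those with $n \leq k$) contribute. This makes each graded component a finite sum, so the doubly-infinite series is well defined level by level and may be rearranged freely.

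Next I would set up the transfer from formal power series. Fix any $c \in T((E))$ with $c_0 = 0$. Since all powers $c^{\otimes n}$ commute with one another, the closed unital subalgebra they generate is commutative, and the assignment $x \mapsto c$, $x^n \mapsto c^{\otimes n}$ extends to an algebra homomorphism $\Phi_c$ from $\mathbb{R}[[x]]$ into $T((E))$ that is continuous for the degreewise topology, continuity being exactly the finiteness observation above. The classical identities $\exp(\log(1+x)) = 1+x$ and $\log(\exp(x)) = x$ hold in $\mathbb{R}[[x]]$. Applying $\Phi_c$ with $c = t = a-1$ gives $\exp(\log(a)) = a$, and applying it with $c = b$ for any $b$ satisfying $b_0 = 0$ gives $\log(\exp(b)) = b$. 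Since $\log$ maps $\{a_0 = 1\}$ into $\{c_0 = 0\}$ and $\exp$ maps $\{c_0 = 0\}$ into $\{a_0 = 1\}$, these two identities establish the bijection. For the general domain $a_0 \neq 0$ one factors $a = a_0\bigl(1 + a_0^{-1}(a - a_0)\bigr)$ and sets $\log(a) = (\log a_0)\,1 + \log\bigl(1 + a_0^{-1}(a - a_0)\bigr)$, reducing to the unipotent computation just completed.

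The main obstacle is bookkeeping rather than conceptual: one must justify interchanging the order of summation in the nested series $\sum_m \tfrac{1}{m!}\bigl(\sum_{n\geq 1}\tfrac{(-1)^{n-1}}{n}t^{\otimes n}\bigr)^{\otimes m}$ and argue that, degree by degree, it equals the image under $x \mapsto t$ of the corresponding scalar manipulation. The finiteness of contributions at each degree (a direct consequence of $t_0 = 0$) is precisely what licenses this rearrangement, so once that is recorded the two identities follow from the commutative one-variable case with no further analysis.
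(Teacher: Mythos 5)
The paper states this lemma bare, with no proof at all, so there is nothing on the paper's side to compare against; your argument has to stand on its own. Its core does stand: the grading observation (that $t_{0}=0$ forces each fixed-degree component of the nested series to be a finite sum, so rearrangement is licensed), the transfer homomorphism $\Phi_{c}\colon \mathbb{R}[[x]]\to T((E))$ for $c_{0}=0$, and the deduction of $\exp(\log(a))=a$ on $\{a_{0}=1\}$ and $\log(\exp(b))=b$ on $\{b_{0}=0\}$ from the classical one-variable identities constitute the standard proof of this fact, and you carry them out correctly. This establishes exactly the claim the paper makes in its main text, namely that $\log$ is bijective on $\{a\in T((E)) \mid a_{0}=1\}$ with inverse $\exp$, which is also the only case the paper's own Definition of the logarithm covers and the only case ever used (the signature always has $a_{0}=1$).

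However, your final reduction to the stated domain $\{a_{0}\neq 0\}$ has a genuine gap. You set $\log(a) = (\log a_{0})\,1 + \log\bigl(1+a_{0}^{-1}(a-a_{0})\bigr)$, which presupposes a real scalar logarithm of $a_{0}$; for $a_{0}<0$ no such thing exists over $\mathbb{R}$. Worse, no element with negative degree-zero component can lie in the image of $\exp$ at all, since the scalar part of $\exp(b)$ is $e^{b_{0}}>0$; so the exponential cannot possibly invert the logarithm on all of $\{a_{0}\neq 0\}$, and the correct domain is $\{a_{0}>0\}$ (or $\{a_{0}=1\}$). This defect is inherited from the lemma's own statement rather than introduced by you, but a complete proof needed to flag it and restrict the domain, rather than assert that the factoring ``reduces to the unipotent computation'': as written, your last step claims to prove something that is false as stated.
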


\begin{theorem}\label{dim_hall}
The dimension of the space of the truncated log signature of $d$-dimensional path up to degree $n$ over $d$ letters is given by:

$$\mathcal{DL}_n = \frac{1}{n} \sum_{d|n}\mu(d)q^{n|d}$$

where $\mu$ is the Mobius function, which maps $n$ to
\begin{eqnarray*}
 \left\{
    \begin{array}{ll}
        0, & \mbox{if $n$ has one or more repeated prime factors} \\
        1, & \mbox{if $n=1$} \\
        (-1)^k & \mbox{if $n$ is the product of k distinct prime numbers}
    \end{array}
\right.
\end{eqnarray*}
\end{theorem}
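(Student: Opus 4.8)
The plan is to recognise the quantity $\mathcal{DL}_n$ as the dimension of the degree-$n$ homogeneous component $L_n$ of the free Lie algebra over $d=\dim E$ letters, and then to compute this dimension via the Poincar\'e--Birkhoff--Witt theorem together with a generating-function and M\"obius-inversion argument. Throughout I read the letter count $q$ in the statement as the dimension $d$, the summation index ``$d\mid n$'' as a dummy divisor, and ``$q^{n|d}$'' as $q^{n/d}$; I also read the formula as the dimension of the degree-$n$ graded piece (the total truncated dimension being $\sum_{k=1}^{n}\mathcal{DL}_k$). By the theorem that the log-signature of $X$ is a Lie series in $\mathcal{L}((E))$, the degree-$n$ part of $lS(X_J)$ lies in $L_n$, and since log-signatures of paths sweep out the whole free Lie algebra degree by degree, the dimension we seek is exactly $\ell_n:=\dim L_n$.

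First I would invoke PBW in characteristic zero: the tensor algebra $T(E)$ is the universal enveloping algebra of the free Lie algebra, so that as graded vector spaces $T(E)\cong \mathrm{Sym}(\mathcal{L}(E))$, where $\mathcal{L}(E)=\bigoplus_{m\geq 1}L_m$ and the grading by word length is preserved. Comparing Hilbert (Poincar\'e) series in the grading variable $t$ then yields
\[
\frac{1}{1-dt}\;=\;\prod_{m\geq 1}\frac{1}{(1-t^m)^{\ell_m}},
\]
where the left-hand side counts the $d^n$ words of length $n$ spanning the degree-$n$ part of $T(E)$, and the right-hand side is the Hilbert series of a symmetric algebra whose generating space has dimension $\ell_m$ in each degree $m$.

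Next I would take logarithms of both sides and expand, obtaining $\sum_{k\geq 1}\frac{(dt)^k}{k}=\sum_{m\geq 1}\ell_m\sum_{j\geq 1}\frac{t^{mj}}{j}$. Comparing the coefficient of $t^n$ on each side gives the Witt identity $d^n=\sum_{k\mid n}k\,\ell_k$, and a single application of M\"obius inversion (using the $\mu$ defined in the statement) isolates $n\,\ell_n=\sum_{e\mid n}\mu(e)\,d^{n/e}$, that is,
\[
\ell_n\;=\;\frac{1}{n}\sum_{e\mid n}\mu(e)\,d^{n/e},
\]
which is the claimed formula (the necklace polynomial on $(d,n)$).

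The conceptual crux is the PBW isomorphism $T(E)\cong\mathrm{Sym}(\mathcal{L}(E))$ of graded vector spaces; once this is in hand, the generating-function bookkeeping and the M\"obius inversion are routine. An alternative route would be purely combinatorial, identifying $\ell_n$ with the number of Lyndon words (equivalently, aperiodic necklaces) of length $n$ on $d$ symbols and counting these directly, but that approach merely relocates the difficulty into proving that Lyndon words index a basis of the free Lie algebra, so I would prefer the PBW/generating-function argument above.
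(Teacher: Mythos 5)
Your proof is correct, but there is nothing in the paper to compare it against: the paper's entire ``proof'' of this theorem is the sentence ``The proof can be found in Corollary 4.14 p.~96 of \cite{reutenauer2003free}'', i.e.\ a bare citation to Reutenauer's book. Your argument---identify $\mathcal{DL}_n$ with $\dim L_n$ for the free Lie algebra on $d$ letters, invoke the Poincar\'e--Birkhoff--Witt theorem to get the graded vector-space isomorphism $T(E)\cong\mathrm{Sym}(\mathcal{L}(E))$, compare Hilbert series $\frac{1}{1-dt}=\prod_{m\geq 1}(1-t^m)^{-\ell_m}$, take logarithms to extract Witt's identity $d^n=\sum_{k\mid n}k\,\ell_k$, and apply M\"obius inversion---is exactly the standard derivation of the Witt/necklace formula, and is essentially the argument underlying the corollary the paper cites, so in substance you have supplied the proof the paper outsources. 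You were also right to repair the statement's notational defects before proving it: as printed, $q$ is undefined (it is Reutenauer's alphabet size, here $d$), the summation dummy ``$d\mid n$'' clashes with the path dimension $d$, and ``$q^{n|d}$'' should read $q^{n/d}$; your reading is the only one under which the statement is true. One caveat worth flagging: your opening step, that log-signatures of paths ``sweep out the whole free Lie algebra degree by degree'' (so that the dimension of the space of truncated log-signatures is exactly $\dim L_n$ rather than merely bounded by it), is asserted without justification. It is true---for instance, by Chow's theorem every group-like element of the truncated tensor algebra arises as the truncated signature of a piecewise-linear path, so the truncated log-signatures fill out the free nilpotent Lie algebra---but this step is genuinely about paths rather than algebra and deserves a sentence or a citation rather than a bare assertion.
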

The proof can be found in  Corollary 4.14 p. 96 of \cite{reutenauer2003free}.

\subsection{Calculation of the log-signature}
Let's start with a linear path. The log signature of a linear path $X_{J}$ is nothing else, but the increment of the path $X_{T} - X_{S}$. \\

The Baker-Cambpell-Hausdorff (BCH) formula gives a general method to compute the log-signature of the concatenation of two paths, which uses the multiplicativity of the signature and the free Lie algebra. It provides a way to compute the log-signature of a piecewise linear path by induction. 
\begin{theorem}
For any $S_1, S_2 \in \mathcal{L}((E))$
\begin{eqnarray}
Z = log(e^{S_1}e^{S_2})=  \sum_{\substack{n\geq 1 \\ p_1,...,p_n \geq 0\\q_1,....q_n \geq 0 \\ p_i+q_i >0}}\frac{(-1)^{n+1}}{n} \frac{1}{p_1!q_1!...p_n!q_n!}r(S_1^{p_1}S_2^{q_1}...S_1^{p_n}S_2^{q_n})
\end{eqnarray}
where $r:A^* \rightarrow A^*$ is the right-Lie-bracketing operator, such that for any word $w=a_1...a_n$ 
$$r(w)=[a_1,...,[a_{n-1}, a_n]...].$$
This version of BCH is sometimes called the Dynkin's formula. 
\end{theorem}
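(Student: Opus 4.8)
The plan is to prove the identity in two stages: a purely formal power-series computation that expresses $Z=\log(e^{S_1}e^{S_2})$ as a weighted sum of \emph{associative} monomials in $S_1,S_2$, followed by a conversion of those monomials into right-nested Lie brackets that is justified by the fact that $Z$ is a Lie series.

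First I would record that $Z \in \mathcal{L}((E))$. For $S_1,S_2$ that arise as log-signatures of paths this is immediate: $e^{S_1}=S(X)$ and $e^{S_2}=S(Y)$ are signatures, their product is $S(X*Y)$ by Chen's identity (Theorem \ref{Chen}), and its logarithm is the log-signature of $X*Y$, which is a Lie series by the earlier theorem that every log-signature is a Lie series. For general Lie series $S_1,S_2$ the same conclusion follows from the standard fact that exponentials of Lie series are group-like, products of group-like elements are group-like, and logarithms of group-like elements are primitive, hence Lie. This Lie-membership is what will license the final bracketing step.

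Next I would carry out the formal expansion. Writing $e^{S_1}e^{S_2}=\sum_{p,q\geq 0}\tfrac{1}{p!\,q!}S_1^{p}S_2^{q}=1+t$ with $t=\sum_{p+q>0}\tfrac{1}{p!\,q!}S_1^{p}S_2^{q}$, I substitute into the logarithm series $\log(1+t)=\sum_{n\geq 1}\tfrac{(-1)^{n-1}}{n}t^{\otimes n}$ of Definition \ref{eqn_log} and multiply out; each of the $n$ factors of $t^{\otimes n}$ contributes one pair $(p_i,q_i)$ with $p_i+q_i>0$, yielding
\begin{equation*}
Z=\sum_{n\geq 1}\frac{(-1)^{n-1}}{n}\sum_{\substack{p_1,\dots,p_n,\,q_1,\dots,q_n\geq 0\\ p_i+q_i>0}}\frac{S_1^{p_1}S_2^{q_1}\cdots S_1^{p_n}S_2^{q_n}}{p_1!\,q_1!\cdots p_n!\,q_n!}.
\end{equation*}
This is the stated formula with the associative product $S_1^{p_1}S_2^{q_1}\cdots$ in place of $r(S_1^{p_1}S_2^{q_1}\cdots)$. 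Convergence is not an issue, since the whole computation is graded and each homogeneous component involves only finitely many terms.

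The main obstacle is the passage from associative monomials to the iterated brackets $r(\cdots)$, which is the only genuinely non-formal step. Here I would invoke the Dynkin--Specht--Wever theorem: the right-bracketing operator $r$ acts on every homogeneous Lie element of degree $m$ as multiplication by $m$. Decomposing $Z=\sum_m Z_m$ into homogeneous parts, each $Z_m$ is a Lie element by the first step, so $Z_m=\tfrac{1}{m}r(Z_m)$; applying the linear operator $r$ term-by-term to the associative expansion above and normalizing each monomial by its total degree $m=\sum_i(p_i+q_i)$ then converts every $S_1^{p_1}S_2^{q_1}\cdots$ into $r(S_1^{p_1}S_2^{q_1}\cdots)$, producing the Lie-bracket form and completing the proof. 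The delicate point is precisely that this replacement is valid only because $Z$ is Lie: on a general associative element $r$ alone is not a projection, and it is the Dynkin idempotent $\tfrac1m r$ together with the degree-grading bookkeeping that makes the term-by-term substitution legitimate.
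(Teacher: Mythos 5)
Your overall strategy is the classical one --- and since the paper's own ``proof'' is nothing more than a pointer to the remark after 3.5.4, p.~81 of \cite{reutenauer2003free}, your write-up is in effect a reconstruction of the argument in the cited source: show $Z=\log(e^{S_1}e^{S_2})$ is a Lie series, expand $\log(1+t)$ formally, and convert associative monomials to brackets via the Dynkin idempotent. Your first two steps are sound. The genuine gap is in the bracketing step. Dynkin--Specht--Wever, applied inside $T((E))$ to the tensor-degree-homogeneous components $Z_m$ of $Z$, yields $Z_m=\frac{1}{m}\rho(Z_m)$, where $\rho$ right-brackets the \emph{letters of $E$} and $m$ is the \emph{tensor} degree. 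But the operator $r$ in the statement brackets $S_1$ and $S_2$ as indivisible symbols, and $S_1,S_2$ are full Lie series rather than degree-one letters, so the monomial $S_1^{p_1}S_2^{q_1}\cdots S_1^{p_n}S_2^{q_n}$ is not tensor-homogeneous of degree $\sum_i(p_i+q_i)$: its word length in $S_1,S_2$ and its tensor degree are two different gradings, and your phrase ``normalizing each monomial by its total degree $m$'' silently conflates them, so the term-by-term replacement is not licensed as written. The standard repair is to prove the identity first in the completed free associative algebra $\mathbb{Q}\langle\langle x,y\rangle\rangle$ on two primitive letters, where word length \emph{is} the grading and your argument is verbatim correct, and then transport it into $T((E))$ along the continuous algebra morphism $x\mapsto S_1$, $y\mapsto S_2$ (well defined because $S_1,S_2$ have vanishing constant term), which carries right-nested brackets of $x,y$ to right-nested brackets of $S_1,S_2$.

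Second, observe that your derivation, once repaired, produces the classical Dynkin formula, whose coefficient carries the additional factor $\frac{1}{p_1+q_1+\cdots+p_n+q_n}$ --- exactly the DSW normalization you yourself invoke --- and this factor is absent from the statement as printed in the paper. A degree-two check with letters $x,y$ exposes the discrepancy: the only nonvanishing bracketed terms of the printed right-hand side in degree two are $r(xy)-\frac12 r(xy)-\frac12 r(yx)=[x,y]$, whereas $\log(e^{x}e^{y})=x+y+\frac12[x,y]+\cdots$, so the printed identity is off by precisely the missing $1/m$ (the factor does appear in the formula of \cite{reutenauer2003free}). Your proof therefore establishes the corrected statement, not the literal one; you should flag this explicitly rather than asserting that the normalized expansion ``produces the Lie-bracket form'' of the identity as stated.
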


\begin{proof}
See remark of appendix 3.5.4 p. 81 in \cite{reutenauer2003free}.
\end{proof}
\subsection{Uniqueness of the log-signature}
Like the signature, the log-signature has the uniqueness stated in the following theorem.
\begin{theorem}[Uniqueness of the log-signature]\label{UniquenessOflogSig}
	Let $X \in BV(J,E)$ . Then $lS(X)$ determines $X$ up to the tree-like equivalence defined in Definition \ref{def_tree_like}. 
\end{theorem}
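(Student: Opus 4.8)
The plan is to reduce the uniqueness of the log-signature to the already-established uniqueness of the signature (Theorem \ref{UniquenessOfSig}) by exploiting the bijection between a path's signature and its log-signature.

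First I would observe that for any $X \in BV(J,E)$ the signature $S(X)$ lies in the subset $\{a \in T((E)) \mid a_0 = 1\}$, since the zeroth level of the signature is by definition equal to $1$. Hence $S(X)$ sits in precisely the domain on which the logarithm map of Definition \ref{eqn_log} is well defined and, by the lemma identifying the exponential map as the inverse of the logarithm on $\{a \in T((E)) \mid a_0 \neq 0\}$, this map is invertible there. This is the single point that requires checking, and it is immediate from the definition of the signature.

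Next I would use this invertibility to recover the signature from the log-signature. Since $lS(X) = \log(S(X))$ by definition and the exponential map is the inverse of the logarithm on this domain, we obtain $S(X) = \exp(lS(X))$. Consequently any two paths with equal log-signatures have equal signatures; equivalently, the assignment $lS(X) \mapsto S(X)$ is well defined and the map $X \mapsto S(X)$ factors through $X \mapsto lS(X)$.

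Finally I would invoke the uniqueness of the signature (Theorem \ref{UniquenessOfSig}), which asserts that $S(X)$ determines $X$ up to tree-like equivalence. Composing with the previous step, $lS(X)$ determines $S(X)$, which in turn determines $X$ up to the tree-like equivalence of Definition \ref{def_tree_like}, completing the argument. Since every step merely chains a bijection with a known uniqueness theorem, I do not expect any genuine obstacle; the proof is essentially a one-line consequence of the invertibility of $\log$ on signatures together with Theorem \ref{UniquenessOfSig}.
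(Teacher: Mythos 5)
Your proof is correct and matches the paper's own argument: the paper likewise derives this result from the bijectivity of the logarithm map on $\{a \in T((E)) \mid a_{0} = 1\}$ (so that $S(X) = \exp(lS(X))$ recovers the signature from the log-signature) combined with the uniqueness of the signature in Theorem \ref{UniquenessOfSig}. There is no gap; this is exactly the intended one-line reduction.
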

Theorem \ref{UniquenessOflogSig} shows that the signature of the path can recover the path trajectory under a mild condition. 
\begin{lemma}\label{Lemma_mono_logsig}
A simple sufficient condition for the uniqueness of the log-signature of a path of finite length is that one component of $X$ is monotone. 
\end{lemma}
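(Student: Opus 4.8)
The plan is to reduce the uniqueness statement for the log-signature entirely to the corresponding statement for the signature, exploiting the fact that the two objects carry exactly the same information. First I would recall that on the domain $\{a \in T((E)) \mid a_0 = 1\}$ the logarithm and exponential maps are mutually inverse bijections (the lemma immediately preceding this one). Since $lS(X) = \log(S(X))$ and $S(X)$ always has leading term $1$, it follows that $lS(X)$ and $S(X)$ determine one another: knowing the log-signature is equivalent to knowing the signature. Hence any discriminating property of $S(X)$ transfers verbatim to $lS(X)$, and in particular Theorem \ref{UniquenessOflogSig} is just Theorem \ref{UniquenessOfSig} read through this bijection.

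With this reduction in hand, the second step is to invoke the uniqueness of the signature (Theorem \ref{UniquenessOfSig}), which guarantees that $S(X)$ — and therefore $lS(X)$ — determines $X$ up to tree-like equivalence (Definition \ref{def_tree_like}). To upgrade this to genuine uniqueness under the stated hypothesis I would show that a monotone coordinate collapses the tree-like equivalence class to a single point: if some component of $X$ is strictly monotone, then $X$ coincides with the \emph{tree-reduced} representative of its class, so $S(X)$ recovers $X$ itself up to reparameterization. Combined with the first step, $lS(X)$ then determines $X$ uniquely, which is the assertion of the lemma.

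The main obstacle is the monotone-component argument: justifying that a strictly monotone coordinate forces $X$ to be the (unique up to reparameterization) tree-reduced representative of its tree-like equivalence class. The intuition is clean: on any sub-interval $[s,t] \subset J$ a tree-like section would, via the height-function inequality of Definition \ref{def_tree_like} with $h(s)=h(t)=0$, force $X_t = X_s$ in every coordinate; but the monotone coordinate $X^{(j)}$ satisfies $X_t^{(j)} \neq X_s^{(j)}$ whenever $s < t$, so $X$ admits no nontrivial tree-like section and is therefore already tree-reduced. Since the tree-reduced representative of a class is unique up to reparameterization, $S(X)$ — equivalently $lS(X)$ — recovers $X$ exactly. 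This geometric step is the only nontrivial ingredient; it is precisely the content of the Remark following Theorem \ref{UniquenessOfSig}, so if that remark may be cited the lemma reduces to the two-line bijection argument above, and otherwise the height-function computation just sketched supplies the missing detail.
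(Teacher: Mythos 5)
Your proposal is correct and is essentially the argument the paper intends: the paper states this lemma without an explicit proof, but its reasoning (the uniqueness discussion in the main text and the Remark following Theorem \ref{UniquenessOfSig}) is precisely your reduction, namely that the $\log$/$\exp$ bijection makes the log-signature and the signature carry the same information, so signature uniqueness up to tree-like equivalence transfers, and a monotone component rules out tree-like sections. The only caveat, shared by the lemma statement itself, is that ``monotone'' must be read as \emph{strictly} monotone (as for the time coordinate) for your height-function step $X^{(j)}_t \neq X^{(j)}_s$ to hold, since a merely weakly monotone (e.g.\ constant) component would still permit tree-like excursions.
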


\section{Comparison of the Signature and Log-signature}
Both the signature and log-signature take the functional view on discrete time series data, which allows a unified way to treat time series of variable length and missing data. For example, we chose one pen-digit data of length 53 and simulate 1000 samples of modified pen trajectories by dropping at most 16 points from it, to mimic the missing data of variable length case (See one example in Figure \ref{missing_data9}). Figure \ref{sig_logsig_comparison_missing_data} shows that the mean absolute relative error of the signature and log-signature of the missing data is no more than $6\%$. Besides, the log-signature feature is more robust against missing data, and it is of lower dimension compared with the signature feature.

\begin{figure}[!ht]
	\centering
	\includegraphics[width= 0.65\textwidth]{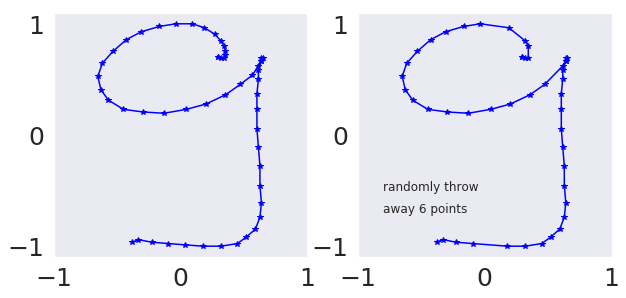}
	\caption{(Left) The chosen pen trajectory of digit 9. (Right) The simulated path by randomly dropping at most 16 points of the pen trajectory on the left.
	}\label{missing_data9}
\end{figure}
\begin{figure}[!ht]
	\centering
	\includegraphics[width= 0.65\textwidth]{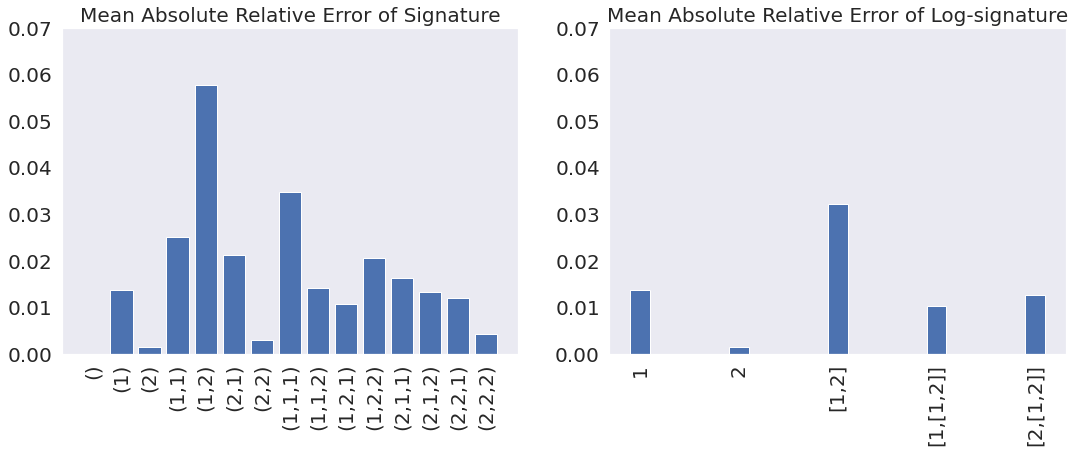}
	\caption{Signature and Log-Signature Comparison for the missing data case.
	}\label{sig_logsig_comparison_missing_data}
\end{figure}
\begin{figure}[!ht]
	\centering
	\includegraphics[width= 0.65\textwidth]{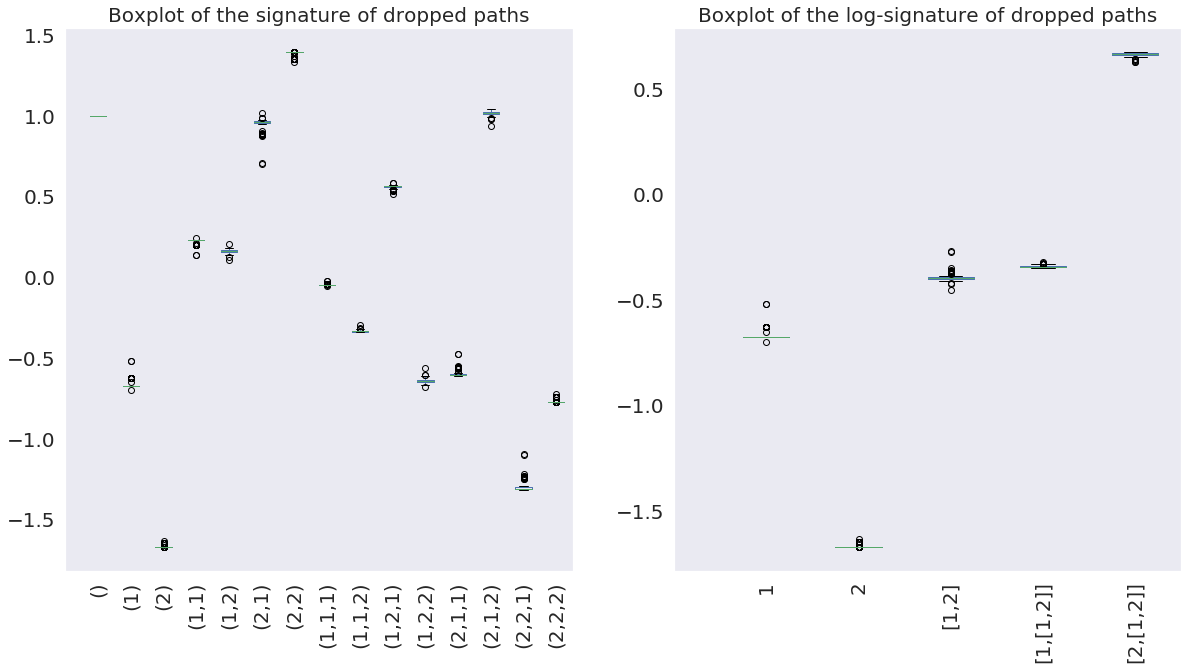}
	\caption{Signature and Log-Signature Comparison for the missing data case.
	}\label{boxplot_Sig_Logsig}
\end{figure}
\section{Backpropagation of the Log-Signature Layer}
In this section, we provide the detailed derivation of backpropagation through time in our paper. Following the notations in Section 4 of our paper, $l_{k}:=l_{k}^{M}$ denotes the log-signatures of a path $x^{\hat{\mathcal{D}}}$ over time partition $[u_{k}, u_{k+1}]$ of degree $M$. Let us consider the derivative of the scalar function $F$ on $(l_{k})_{k = 1}^{N}$ with respect to path $x^{\hat{\mathcal{D}}}$, given the derivatives of $F$ with respect to $(l_{k})_{k = 1}^{N}$. W.l.o.g assume that $\hat{\mathcal{D}} \subset \mathcal{D}$. By the Chain rule, it holds that
\begin{eqnarray}\label{BP_1}
\frac{\partial F((l_{1}, \cdots, l_{N}))}{\partial x_{t_{i}}} = \sum_{k = 1}^{N}\frac{\partial F(l_{1}, \cdots, l_{N})}{\partial l_{k}}\frac{\partial l_{k}}{\partial x_{t_{i}}}.
\end{eqnarray}
where $k \in \{1, \cdots, N\}$ and $i \in \{0, 1, \cdots, n\}$.

If $t_{i} \notin [u_{k-1}, u_{k}]$, $\frac{\partial l_{k}}{\partial x_{t_{i}}} = 0;$ otherwise $\frac{\partial l_{k}}{\partial x_{t_{i}}}$ is the derivative of the single log-signature $l_{k}$ with respect to path $x_{u_{k-1}, u_{k}}$  where $t_{i} \in \mathcal{D} \cap [u_{k-1}, u_{k}]$. The log signature $lS(x^{\hat{\mathcal{D}}})$ with respect to $x_{t_{i}}$ is proved differentiable and the algorithm of computing the derivatives is given in \cite{ReizensteinIhesis2018}, denoted by $\triangledown_{x_{t_{i}}} LS(x^{\hat{\mathcal{D}}})$. This is a special case for our log-signature layer when $N = 1$. In general, for any $N \in \mathbb{Z}^{+}$, it holds that $\forall  i \in \{0, 1, \cdots, n\}$ and $k \in \{1, \cdots, N\}$,\footnote{Our implementation of Log-Signature layer in Tensorflow is implemented using the iisignature python package \cite{ReizensteinIisignature2018} and the customized layer in Keras. We implement the pytorch version of the Log-Signature layer using signatory package with some modification to handle time series of variable length. }
\begin{eqnarray}\label{logsigseq_eqn}
\frac{\partial l_{k}}{\partial x_{t_{i}}} = \mathbf{1}_{t_{i} \in [u_{k-1}, u_{k}]} \triangledown_{x_{t_{i}}} LS(x_{u_{k-1}, u_{k}}).
\end{eqnarray}
 The backpropagation of the Log-Signature Layer can be implemented using Equation \eqref{BP_1} and
\eqref{logsigseq_eqn}.
\section{Other Path Transformation Layers}
In this section, we introduce two other useful Path Transformation Layers which are often accompanied with the Embedding Layer in the PT-Logsig-RNN to improve the performance. 

\textbf{Accumulative Layer}. Accumulative Layer (AL) maps the input sequence $(X_{t_{i}})_{i = 1}^{n}$ to its partial sum sequence $Y_{t_{i}}$, where $Y_{t_{i}} = \sum_{j = 1}^{i}X_{t_{j}}$, and $i=1, \cdots, n$. One advantage of using the Accumulative Layer along with Log-Signature Layer is to extract the quadratic variation and other higher order statistics of an input path $X$ effectively \cite{ni2015multi}.  

\textbf{Time-incorporated Layer}. Time-incorporated Layer (TL) is to add the time dimension to the input $(X_{t_{i}})_{i = 1}^{n}$; in formula, the output is $(t_{i}, X_{t_{i}})_{i = 1}^{n}$. As speed information is informative in most of SHAR tasks, e.g. distinguish running and walking, the log-signature of the time-incorporated transformation of a path can preserve such information.

\section{Numerical Experiments}
Network Architecture of PT-Logsig-RNN on Chalearn2013 data can be found in Table \ref{gesturearche}.
\begin{table}[!h]
\centering
\scalebox{1}{
\begin{tabular}{l|l|l|l}
\hline
\multicolumn{2}{c|}{\textbf{Layer}} & \textbf{Output shape} & \textbf{Description}\\
\hline
\multicolumn{2}{c|}{Input} & $(n, 20, 3)$ & Input with $n$ time steps \\
\cline{1-2}
&Conv2D & $(n, 20, 32)$ & Kernel size=$1\times 1\times 32$ \\
Embedding&Conv2D & $(n, 20, 16)$ & Kernel size=$3\times 1\times 16$\\
$~~~~$Layer&Flatten & $(n,320) $& Flatten spatial dimensions\\
&Conv1D & $(n, 30)$ & Kernel size=$1\times 304\times 30$\\
\cline{1-2}
\multicolumn{2}{c|}{Accumulative Layer (AL)} & $(n,30)$ & Partial sum on the series\\
\multicolumn{2}{c|}{Time-incorporated Layer (TL)} & $(n,31)$ &Add temporal dimension\\
\multicolumn{2}{c|}{Logsig Layer} & $(N, 496)$ & $M=2, N=4$ after tuning\\
\multicolumn{2}{c|}{Add Starting Points} & $(N,527)$ & Starting points of TL output\\
\multicolumn{2}{c|}{LSTM} & $(N,128)$ & Return sequential output \\
\multicolumn{2}{c|}{Output} & $20$ &\\
\hline
\end{tabular}}
\caption{Architecture of the PT-Logsig-RNN on Chalearn 2013}
\label{gesturearche}
\end{table}
\bibliographystyle{unsrtnat}
\bibliography{egbib}  

\end{document}